\documentclass{article}

\PassOptionsToPackage{numbers, compress}{natbib}
\usepackage[preprint]{neurips_2026}

\usepackage{microtype}
\usepackage{graphicx}
\usepackage{subcaption}
\usepackage{booktabs} 

\usepackage{hyperref}

\usepackage{amsmath,amsfonts,bm}

\def\eqref#1{equation~\ref{#1}}

\def\1{\bm{1}}

\DeclareMathAlphabet{\mathsfit}{\encodingdefault}{\sfdefault}{m}{sl}
\SetMathAlphabet{\mathsfit}{bold}{\encodingdefault}{\sfdefault}{bx}{n}

\usepackage{amsmath}
\usepackage{amssymb}

\usepackage{mathtools}
\usepackage{algorithm}
\usepackage{algorithmic}
\usepackage{amsthm}

\usepackage{colortbl}
\usepackage{xcolor}
\arrayrulecolor{black}
\usepackage{hyperref}
\usepackage{url}
\usepackage{booktabs}
\usepackage{enumitem}
\usepackage{amsmath}
\usepackage{amsthm}
\usepackage{amssymb}
\usepackage{natbib}
\usepackage{xspace}
\usepackage{adjustbox}
\usepackage{multirow}
\usepackage{wrapfig}
\usepackage{pifont}
\usepackage{sidecap}
\usepackage{graphicx}
\usepackage{colortbl} 
\usepackage{caption}        
\usepackage{soul}
\usepackage{makecell}  

\usepackage[capitalize,noabbrev]{cleveref}

\theoremstyle{plain}
\newtheorem{theorem}{Theorem}[section]
\newtheorem{proposition}[theorem]{Proposition}

\theoremstyle{definition}

\theoremstyle{remark}

\usepackage[textsize=tiny]{todonotes}
\definecolor{midblue}{rgb}{0.21,0.49,0.74}
\definecolor{alicegreen}{rgb}{0.90,1.0,0.90}
\definecolor{alicered}{rgb}{1.0, 0.90, 0.90}
\definecolor{aliceblue}{rgb}{0.93, 0.93, 1.0}

\definecolor{lightYellow}{HTML}{FFF8DC}  
\definecolor{cornellred}{rgb}{0.7, 0.11, 0.11}
\definecolor{cadmiumgreen}{rgb}{0.0, 0.42, 0.24}
\definecolor{darkblue}{rgb}{0.83, 0.89, 0.97}
\definecolor{Red7}{rgb}{0.941, 0.243, 0.243}
\definecolor{Green7}{RGB}{55, 178, 77} 
\definecolor{Blue9}{rgb}{0.098,0.3,0.9}
\definecolor{customyellow}{HTML}{FFFCF5}
\definecolor{customred}{HTML}{FFF9F9}
\definecolor{customblue}{HTML}{F9FCFF}
\definecolor{customgreen}{HTML}{F9FFF9}
\definecolor{ForestGreen}{rgb}{0.13, 0.55, 0.13}
\definecolor{ForestBlue}{rgb}{0.13, 0.45, 0.80}
\definecolor{ForestRed}{rgb}{0.92, 0.2, 0.2}

\usepackage{pifont}

\usepackage[utf8]{inputenc} 
\usepackage[T1]{fontenc}    
\usepackage{hyperref}       
\usepackage{url}            
\usepackage{booktabs}       
\usepackage{amsfonts}       
\usepackage{nicefrac}       
\usepackage{microtype}      
\usepackage{xcolor}         

\title{Dual-Dimensional Consistency: Balancing Budget and Quality in Adaptive Inference-Time Scaling}

\author{%
  Rongman Xu\textsuperscript{*} \quad
  Hang Yan\textsuperscript{*} \quad
  Yifei Li\textsuperscript{*} \quad
  Tianzhe Zhao \quad
  Yanrui Wu \quad
  Bo Li \quad
  \vspace{0.07cm}\\
  \texttt{rongmanxu@stu.xjtu.edu.cn} \quad \texttt{hyan@stu.xjtu.edu.cn} \quad \texttt{yifeilee@stu.xjtu.edu.cn}
  \vspace{0.2cm}\\ 
  Xi'an Jiaotong University \\
}

\begin{document}

	\maketitle
	\begingroup
	\renewcommand\thefootnote{}
	\footnotetext{$^{*}$ Equal contribution.}
	\endgroup
	
	
	\begin{abstract}
        Large Language Models (LLMs) have demonstrated remarkable abilities in reasoning. However, maximizing their potential through inference-time scaling faces challenges in trade-off between sampling budget and reasoning quality. Current strategies remain inefficient as they typically treat sampling width and depth as orthogonal objectives, where width consensus methods risk reinforcing hallucinations, while depth pruning mechanisms prematurely truncate complex yet valid reasoning chains.
        Therefore, we propose Dual-Dimensional Consistency (DDC), a unified framework that bridges path quality with adaptive termination. By coupling Confidence-Weighted Bayesian protocol with a Trend-Aware Stratified Pruning, our method ensures that computational resources are concentrated on high quality reasoning paths, filtering hallucinations while accelerating consensus. Evaluations across five benchmarks demonstrate that this approach reduces token consumption by over $10\times$ while maintaining or exceeding the accuracy of strong baselines across various LLMs.
	\end{abstract}
	
	\section{Introduction}
    \label{introduction}
	\vspace{-0.2cm}
	Large Language Models (LLMs) have achieved remarkable performance across a wide range of tasks due to extensive pre-training and post-training \citep{devlin2019bertpretrainingdeepbidirectional,radford2021learningtransferablevisualmodels}. However, these training-centric approaches incur substantial computational costs \citep{raffel2023exploringlimitstransferlearning,stiennon2022learningsummarizehumanfeedback}.
    To address this, inference-time scaling  \citep{wei2023chainofthoughtpromptingelicitsreasoning,wang2023selfconsistencyimproveschainthought,wu2025inferencescalinglawsempirical,xu2025phidecodingadaptiveforesightsampling} has emerged as an efficient alternative, significantly saving computational costs while maintaining and even enhancing performance  \citep{deng2025testtimemodeladaptationquantized}.

    To implement inference-time scaling, traditional static strategies like Self-Consistency  \citep{wang2023selfconsistencyimproveschainthought}, as illustrated in \cref{fig:draft}(a), allocate a fixed budget like $B=512$ to sample reasoning paths for majority voting to reach a consensus. This rigid paradigm is inefficient, as simple queries consume the same excessive resources as complex ones \citep{NEURIPS2024_3c1e1fdf}. Fundamentally, this approach attempts to compensate for low path quality through massive samplings, inefficiently prioritizing quantity over the correctness of reasoning.
    
    \begin{figure*}[t] 
      \centering
      \includegraphics[scale=0.3]{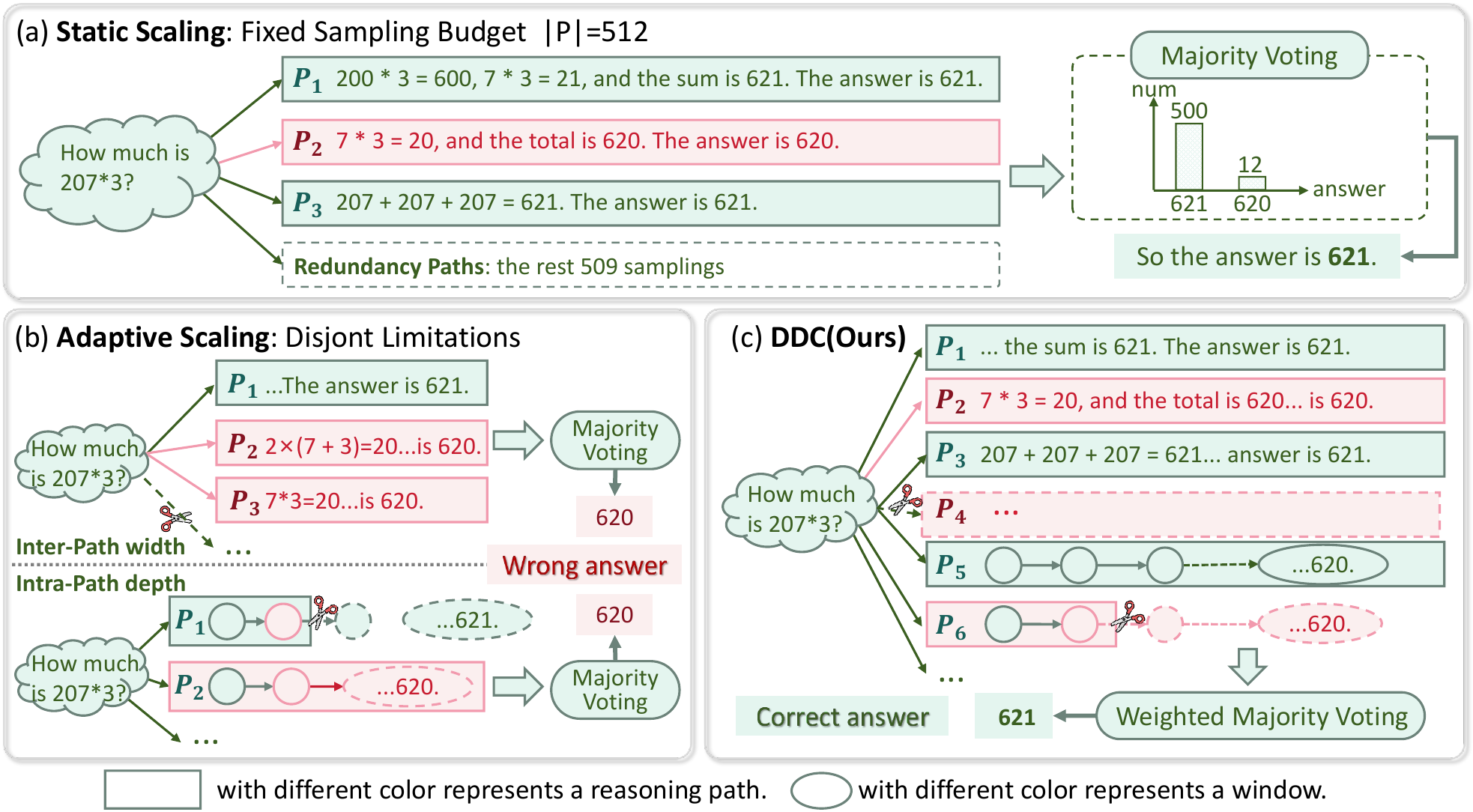}
      \caption{
        Comparisons of different methods. (a) is traditional static scaling method with fixed sampling budget of 512. 
        (b) is the current adaptive scaling method of inter-path sampling width and intra-path generation depth. 
        (c) is our method: a joint framework.
      }
      \label{fig:draft}
    \end{figure*}
    
    To address these limitations, recent studies explore adaptive inference strategies. Based on the classification in \cref{fig:draft}(b), these methods optimize inference via two dimensions: inter-path width and intra-path depth. Width-based approaches manage sampling budget by terminating the entire sampling process upon early consensus. However, as models often exhibit hallucinations, assuming consistency equals correctness risks unsafe termination on consensus where paths agree on incorrect answers. Depth-based methods focus on quality of each path \citep{sharma2025thinkjustenoughsequencelevel}, pruning invalid paths via step-wise verification like token confidence \citep{snell2024scalingllmtesttimecompute,huang2024largelanguagemodelsselfcorrect}. However, current pruning strategy often leads to sampling redundancy for simple queries and incorrect pruning of valid complex paths \citep{li2025modelinguncertaintytrendstimely}. Crucially, prevailing works treat width and depth as orthogonal objectives  \citep{aggarwal2023letssamplestepstep}, neglecting the intrinsic trade-off between sampling budget, generating excessive paths like \cref{fig:draft}(b) and path quality, which leads to suboptimal resource allocation.
    
    To bridge this gap, we introduce Dual-Dimensional Consistency(DDC), a unified framework designed to scale adaptive inference by jointly optimizing path quality and sampling budget. To minimize sampling budget, we propose a Confidence-Weighted Bayesian Termination protocol. This mechanism integrates path-level quality into the consensus to decide the final answer, ensuring the entire sampling terminates early only when the evidence is both consistent and high confidence. Simultaneously, to refine path quality, we introduce Trend-Aware Stratified Pruning. Modeling each reasoning process as a dynamic signal to disentangle stable trends from stochastic noise \citep{li2025performanceanalysismomentummethod,yan2025murmomentumuncertaintyguided}, this pruning strategy adaptively discards hallucinations while preserving complex but correct paths. By unifying these components, DDC achieves balance between width of sampling and depth of individual paths, dynamically allocating resources to the most promising reasoning chains.
    
    Evaluations across five challenging reasoning benchmarks validate the efficacy of our approach. For instance, on AIME25 using Qwen3-4B, DDC achieves a 15.6\% accuracy gain over Self-Consistency while reducing token consumption by approximately 27$\times$. On average, our method reduces computational costs by over 10$\times$ compared to strong baselines, delivering superior performance with significantly reduced budget.
    
    The main contributions are highlighted as follows: \textbf{(1) Unified Optimization Framework.} We propose a framework that couples inter-path width with intra-path depth to maximize efficiency by jointly calibrating termination and pruning based on shared quality signals. \textbf{(2) Confidence-Weighted Bayesian Termination.} We formulate early stopping as a Bayesian sequential decision problem that weights sampling evidence with path quality, ensuring consensus and preventing unsafe termination. \textbf{(3) Trend-Aware Stratified Pruning.} We develop a dynamic pruning strategy that distinguishes reasoning trends from noise, allowing the model to filter hallucinations while preserving valid reasoning paths. \textbf{(4) State-of-the-Art Performance.} Extensive evaluations show that DDC redefines the efficiency-accuracy trade-off, achieving over $10\times$ efficiency gains while maintaining performance parity.
    
	\section{Related Works}
	
    \textbf{Inference-Time Scaling.}
    Recent advances in scaling have shifted from parameter scaling to inference-time scaling, revealing a inference-time scaling law  \citep{snell2024scalingllmtesttimecompute, brown2024largelanguagemonkeysscaling, wu2025inferencescalinglawsempirical} where increasing computational resources during inference consistently enhances performance \citep{hao2023reasoninglanguagemodelplanning}. These methods are primarily categorized into training-based and training-free approaches.
    Training-based methods leverage reinforcement learning with verifiable rewards (RLVR) to induce long form reasoning patterns, as demonstrated in models such as DeepSeek-R1  \citep{deepseekai2026deepseekr1incentivizingreasoningcapability} and other reasoning-optimized frameworks  \citep{ye2025limoreasoning, zuo2025ttrltesttimereinforcementlearning}. 
    Training-free methods optimize reasoning during inference, typically through searching-based or sampling-based strategies. Searching-based methods \citep{yao2023treethoughtsdeliberateproblem,hao2023reasoninglanguagemodelplanning,xu2025phidecodingadaptiveforesightsampling}, refine answers by exploring multiple paths, while sampling-based methods \citep{wang2023selfconsistencyimproveschainthought,kang2025scalablebestofnselectionlarge} generate and select from multiple candidate solutions. 
    \begin{figure*}[ht]
      \centering
      \includegraphics[scale=0.35, keepaspectratio]{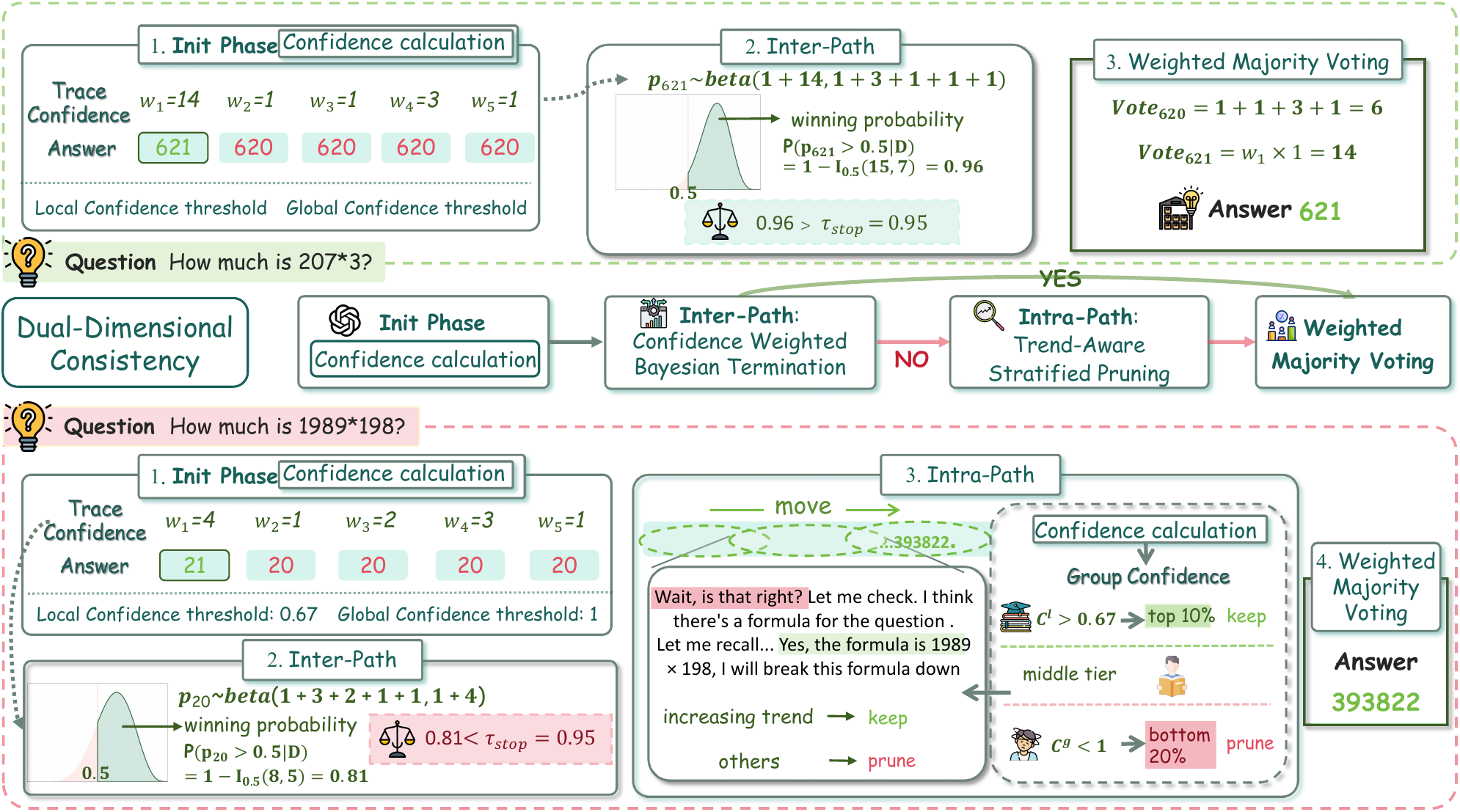}
      \caption{
        Illustration of the DDC inference process. Init Phase Confidence Calculation computes initial metrics to guide resource allocation. Inter-Path Bayesian Termination updates the Beta distribution using global
        path confidence as weights, triggering early stopping upon consensus. If generation continues, Intra-Path Trend-Aware Pruning monitors reasoning dynamics via a sliding window to dynamically discard invalid paths.
      }
      \label{fig:draft2}
    \end{figure*}
    These methods improve reasoning without retraining, though they often involve high computational costs  \citep{madaan2023selfrefineiterativerefinementselffeedback, lan2024criticevalevaluatinglargelanguage, li2025performanceanalysismomentummethod}.

    \textbf{Adaptive Inference and Verification.} Recent works try to dynamically adjust compute resources based on query difficulty. Inter-Path methods adaptively sample and aggregate paths \citep{li2024escapeskyhighcostearlystopping}, using techniques like early stopping and difficulty aware policies \citep{wang2025makepennycountdifficultyadaptive,kim2026reliabilityawareadaptiveselfconsistencyefficient} to reduce redundant samplings. Intra-Path methods optimize the depth of reasoning by guiding or pruning searches \citep{aggarwal2023letssamplestepstep} and depth control strategies MUR  \citep{yan2025murmomentumuncertaintyguided}) that avoid overthinking and hallucinations by monitoring uncertainty. These strategies treat inter-path width and intra-path depth as orthogonal objectives, neglecting the connection between quality and global budget.
    
    \textbf{Reasoning path Analysis.}
    Monitoring the generation process is essential for reliable scaling. While token-level uncertainty can signal confidence  \citep{quevedo2024detectinghallucinationslargelanguage,zhang2025tokurtokenleveluncertaintyestimation}, raw scores are often too noisy for precise pruning  \citep{yan2025murmomentumuncertaintyguided}.
    Drawing inspiration from signal processing  \citep{li2025performanceanalysismomentummethod}, specifically Singular Spectrum Analysis, we denoise these paths. This allows us to rescue hard but correct paths that exhibit temporary confidence dips, a nuance overlooked by current static pruning and selection methods.

    \section{Problem Definition}

    LLM reasoning is generation task: given query $x$, the model samples path $y$, tokens sequence $\{ y_t \}_{t=1}^T$ according to the distribution $P_\theta(y \mid x)$, leading to a answer $u \in \mathcal{U}$. Let $u_{\text{gt}}$ be the ground truth.
    
    Static Scaling identifies the leader $u^* = \arg\max_{u \in \mathcal{U}} \Phi(u; S)$ from  $n$ independent paths $\mathcal{S} = \{y^{(p)}\}_{p=1}^{n}$, 
    where $\Phi(u; \mathcal{S})$ is the consensus measuring the support for answer $u$ within $\mathcal{S}$. 
    
    Adaptive Scaling defines this as a dual decision: (1) Inter-Path Termination in Width: Terminate sampling when the probability that $u^*$ is the final answer exceeds a reliability threshold $\tau_{\text{stop}}$. (2) Intra-Path Pruning in Depth: Prune $y^{(p)}$ at step $t$ if it is unlikely to reach $u_{\text{gt}}$. The objective is to construct a dynamic set $S_{\text{dyn}}$ to minimize $\sum_{y \in S_{\text{dyn}}} |y|$ subject to $P(\Psi(S_{\text{dyn}}) = u_{\text{gt}}) \ge \delta_{\text{target}}$, 
    where $\Psi(\mathcal{S}_{dyn}) = u^*$ represents the final consensus answer, and $\delta_{\text{target}}$ is accuracy of other scaling method. 
	
	\section{Methodology}
    \label{method}
    To balance sampling budget and path quality, we propose DDC, an adaptive framework utilizing a multi-granular Confidence Metrics.

    As illustrated in \cref{fig:draft2}, DDC proceeds as follows: Init Phase Confidence Calculation first deploys a budget $B_{\text{init}}$ to profile the initial confidence. Inter-Path Bayesian Termination then assesses, executing an early exit if the leading answer exhibits statistical sufficiency. If uncertainty remains, Intra-Path Trend-Aware Pruning expands generation to the full budget $B$, actively monitoring reasoning with a sliding window in real-time to discard low quality trajectories. Finally, Consensus derives the answer through confidence-weighted majority voting on the set of verified paths.
    
    \subsection{Multi-Granular Confidence Calculation Metrics}
    To quantify reasoning quality, we adopt the confidence measurements proposed by  \citep{fu2025deepthinkconfidence}, establishing a multi-granular confidence metrics that dynamically captures step-wise dynamics from token-level to path-level.
    
    \begin{wrapfigure}[9]{r}{0.4\textwidth}
        \centering
        \vspace{-1em}
       \includegraphics[width=1\linewidth]{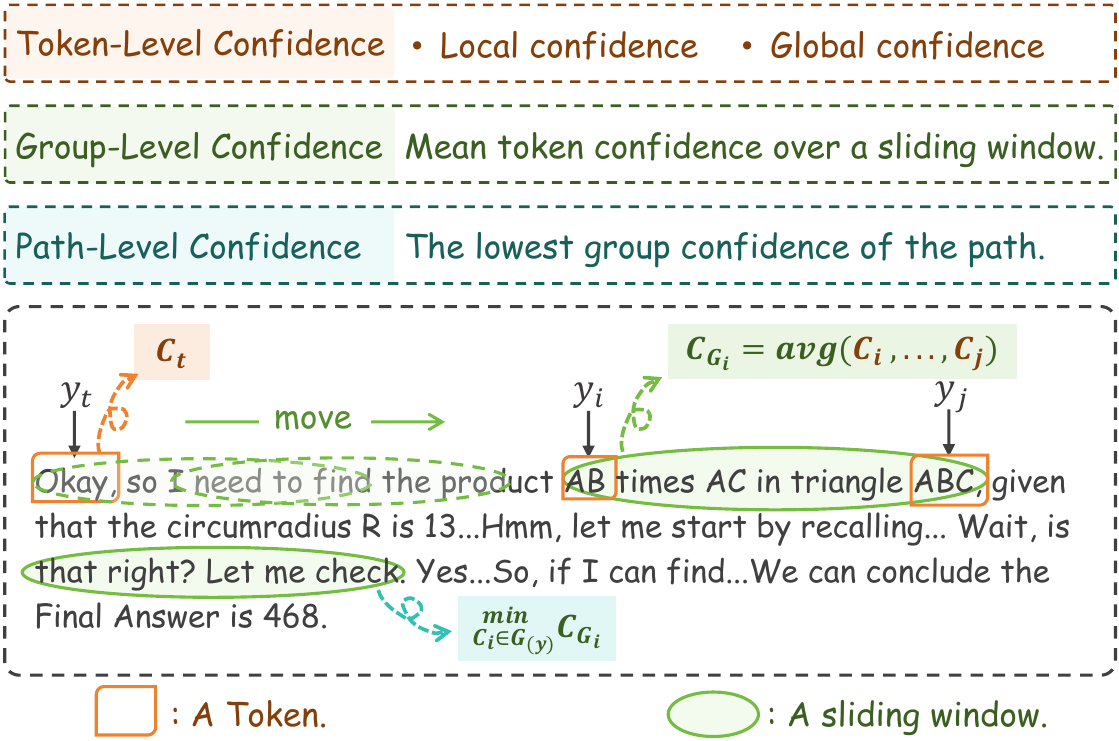}  
       \vspace{-1.5em}
        \caption{Illustrating the  multi-granular confidence.}
        \label{metrics}
    \end{wrapfigure} 
    
    As illustrated in \cref{metrics}, our metrics is defined as follows:
    \paragraph{Token-Level Confidence.}
    We measure the model's certainty at generation step $t$ by two metrics: \textbf{(1)Local Confidence $C_t^l$} is defined as the maximum token probability:
    $C_t^l = P_t(1)$.
        \textbf{(2)Global Confidence $C_t^g$} captures richer distributional information  from the top-$k$ tokens by computing their negative average log-probability:
    \begin{equation}
            C_t^g = -\frac{1}{k} \sum_{j=1}^{k} \log P_t(j),
    \end{equation}
    where lower confidence reflects uncertain predictions.
    
    \paragraph{Group-Level Confidence.}
    For a sliding window of tokens $G_i$ starting at step $i$, the group confidence $C_{G_i}$ is defined as the mean token confidence:
    \begin{equation}
    \label{group_confidence}
        C_{G_i} = \frac{1}{|G_i|} \sum_{t = i}^{i+|G_i|-1} C_t,
    \end{equation}
    which captures lower value in sustained uncertain reasoning.
    
    \paragraph{Path-Level Confidence.}
    The overall quality of a path $y$ is its lowest group confidence:
    \begin{equation}
    \label{trace_confidence}
        C_{\text{path}}(y) = \min_{G_i \in \mathcal{G}(y)} C_{G_i},
    \end{equation}
    where $\mathcal{G}(y)$ is the set of all sliding windows in $y$. This metric more sensitively identifies the most uncertain segments in the reasoning path, which enables more effective elimination of low-quality paths while providing more discriminative weight signals for voting.

     \textbf{Init Phase.} We generate a set of $B_{\text{init}}$ paths. For each path, we compute the aforementioned confidence metrics to establish a query-specific quality baseline, forming the initial evidence.
    \subsection{Inter-Path: Confidence-Weighted Bayesian Termination}
    To minimize sampling redundancy without the overhead of high-dimensional Dirichlet integration  \citep{aggarwal2023letssamplestepstep}, we reframe consensus as a binary competition between the leading candidate $u^*$ and the aggregate alternatives, simplifying the model to a Beta distribution, enabling efficient real-time termination.

    \textbf{Bayesian Sequential Decision Formulation.} 
    We frame the adaptive scaling of sampling width as a Bayesian sequential decision problem. After the init phase and given observed data $\mathcal{D}$, we model the uncertainty of the current leading answer $u^*$'s winning probability $p_u$ as a Beta posterior: $p_u \mid \mathcal{D} \sim \text{Beta}(\alpha_u, \beta_u)$, where $\alpha_u$ and $\beta_u$ accumulate the confidence of paths supporting the leader and competitors, respectively, enabling rigorous quantification of termination risks.

    
    \textbf{Adaptive Stopping Rule.} 
    To determine the termination point, we utilize the cumulative distribution function (CDF) of $\text{Beta}(\alpha_u, \beta_u)$: 
    \begin{equation}
        I_\gamma(\alpha_u,\beta_u) = \frac{\int_0^\gamma t^{\alpha_u-1}(1-t)^{\beta_u-1}dt}{\int_0^1 t^{\alpha_u-1}(1-t)^{\beta_u-1}dt},
    \end{equation}
     allows us to quantify the probability that the answer $u$ occupies a sufficient majority portion $\gamma$ as the ratio of the partial integral to the complete normalizing integral. Generation terminates when the posterior confidence in the leading answer $u^* = \arg\max_{u} \alpha_u$ implies that the probability of $p_{u^*}$ exceeding the majority threshold $\gamma$ is high enough:
    \begin{equation}
    \mathbb{P}(p_{u^*} > \gamma \mid \mathcal{D}) = 1 - I_{\gamma}(\alpha_{u^*}, \beta_{u^*}) > \tau_{\text{stop}}.
    \end{equation}
    We set $\gamma = 0.5$ to ensure the formation of an absolute majority rather than a relative majority and $\tau_{\text{stop}} = 0.95$ for high statistical significance. This rule provides a robust termination criterion, minimizing sample size while guaranteeing that the result is backed by an absolute advantage.
    
    \textbf{Confidence-Weighted Parameter Update.} 
    To further enhance the reliability of this early exit, we introduce a confidence-weighted Bayesian update mechanism. Instead of treating each sample equally, we assign a weight $w_i = C_{\text{path}}(y^{(i)})$ to each reasoning path. This integrates path quality directly into the Bayesian framework, ensuring that termination is driven by rigorous reasoning evidence rather than mere frequentist consensus. By prioritizing high integrity paths, this mechanism prevents unsafe exits on spurious majorities while adaptively minimizing sampling budget.
    
    For a candidate answer $u$, we define the accumulated evidence $\alpha_{u}$ in support of $u$ and counter-evidence $\beta_{u}$ in support of all competitors as follows:
    \begin{equation}
    \alpha_{u} = 1 + \sum_i \mathbb{I}\{\text{ans}_{i} = u\} \cdot w_{i}, 
    \end{equation}
    \begin{equation}
    \quad \beta_{u} = 1 + \sum_{j} \mathbb{I}\{\text{ans}_{j} \neq u\} \cdot w_{j},
    \end{equation}
    where the constant $1$ represents a Laplace prior, corresponding to a uniform prior $\text{Beta}(1,1)$ to account for initial uncertainty. This ensures that the model maintains a non-informative initial state before any evidence is collected and prevents overconfidence during the initial sampling stages.
    Detailed analysis theory is in Appendix~\ref{sec:theory}.

    \subsection{Intra-Path: Trend-Aware Stratified Pruning}
    
    To optimize path quality for adaptive inference scaling, it is insufficient to simply increase the number of samples, as indiscriminate scaling introduces excessive noise and computational waste. We propose Trend-Aware Stratified Pruning to ensure that only high quality reasoning paths continue to generate tokens, while low quality paths are pruned early to prevent them from polluting the final answer consistency vote. This hierarchical gating mechanism transitions from static scalar thresholds to a dynamical analysis of the reasoning process, allowing the model to adaptively allocate compute based on the evolving trend of each reasoning path. We categorize paths into three tiers based on the trade-off between computational cost and information exploitation.
    
    First, we caculate the group confidence $C_{G_m}$ using local confidence. Paths maintaining confidence higher than top-10\% of the init distribution bypass subsequent analysis and proceed with reasoning on the insight that high local confidence correlates with correct reasoning  \citep{fu2025deepthinkconfidence}. Then we caculate $C_{G_m}$ utilizing global confidence with more comprehensive information to safely identify paths and prune the bottom-20\% immdiately. The critical challenge lies in the middle tier paths, which contain a mixture of valid difficult reasoning and hallucinations. Static averaging fails here, as it cannot distinguish between a temporary confidence dip during hard problem-solving and the chaotic, stagnant confidence characteristic of a hallucination.

    To resolve this, we map the reasoning process into a dynamical phase space. We construct a position-velocity state vector $\mathbf{z}_{t} \in \mathbb{R}^{2}$ for each step $t$ within a window of size $k$:
    \begin{equation}
    \mathbf{z}_{t} = \begin{bmatrix}
    \displaystyle \frac{C_t^g - \mu_c}{\sigma_c} \\[8pt]
    \displaystyle \frac{C_t^g - C_{t-1}^g}{\sigma_c}
    \end{bmatrix},
    \end{equation}
    where $C_t^g$ is the global confidence, $\mu_c$ is the window mean and $\sigma_c$ is the window's standard deviation. This normalization ensures scale-invariance, meaning the analysis focuses on the relative evolution trend of confidence specifically rather than using absolute confidence value. And the two components represent the relative position and relative velocity of confidence  respectively. We represent the path as a matrix $\mathbf{Z} \in \mathbb{R}^{k \times 2}$ and perform an eigendecomposition on the lag-covariance matrix $\mathbf{\Sigma} = \frac{1}{k-1} \mathbf{Z}^\top \mathbf{Z}$ to obtain eigenvalues $\lambda_1 \geq \lambda_2$ and eigenvector $\vec{v_1}$. $\lambda_1$ reflects the dispersion along the primary direction of change, while $\lambda_2$ reflects the orthogonal dispersion. The difference $\lambda_1 - \lambda_2$ measures the directionality of the path: a large difference indicates trend like a path recovering from a dip, whereas $\lambda_1 \approx \lambda_2$ indicates isotropic, stochastic noise typical of hallucinations. We define the \textbf{Structural Instability Score} $\mathcal{R}$ to penalize both unguided noise and negative momentum:
    \begin{equation}
    \mathcal{R} = \left( 1 - \frac{\lambda_1 - \lambda_2}{\lambda_1 + \lambda_2} \right) + \eta \cdot \mathbb{I}(\text{align} < 0) \cdot \text{align}^2,
    \end{equation}
    where $\text{align} = |v_{1,x}| \cdot \text{sign}(\bar{v})$, $|v_{1,x}|$ represents the alignment of the path's primary trend with the confidence axis, and $\bar{v}$ is the mean normalized velocity. 
    The $\text{align}$ metric captures both the directionality of the path by the eigenvector and the sign of the velocity trend.
    Crucially, we employ the mean velocity $\bar{v}$ as the directional pivot for $\text{align}$ because the rate of change in confidence serves as a leading indicator of reasoning failure rather than position.
    
    If the dominant trend is a sustained decrease in confidence where $\bar{v}<0 $, we imposes a quadratic penalty. This ensures that we salvage difficult but recovering paths where confidence dips then rebounds while aggressively pruning paths that are sustained degenerating.

    Finally, we establish the pruning criterion based on $\mathcal{R}$. While static percentile thresholds suffice for filtering clear-cut confidence extremes as in our initial gating, they are ill-suited for this middle tier where the baseline instability varies significantly across queries. Imposing a fixed quota here risks indiscriminately pruning valid paths in complex tasks or retaining noise in trivial ones.
    To ensure robust adaptivity, we formulate pruning as a query-specific outlier detection problem. We derive the dynamic threshold $\tau_{\text{risk}}$ using Tukey Fences  \citep{tukey1977exploratory}, which defines a statistical boundary based on the dispersion of the init population:
    \begin{equation}
        \tau_{\text{risk}} = Q_3 + 1.5 \cdot \text{IQR},
    \end{equation}
    where $Q_3$ denotes the third quartile and $\text{IQR} = Q_3 - Q_1$. By calibrating the threshold against the population's interquartile range, this method automatically scales with the structural variance of the reasoning process, effectively isolating anomalous paths $\mathcal{R} > \tau_{\text{risk}}$ while accommodating the intrinsic difficulty of the task.
    
    \subsection{Final Consensus}
    \label{Consensus}
    The final answer is determined by applying weighted majority voting on the set of all verified paths $\mathcal{S} = \mathcal{Y}_{init} \cup \mathcal{Y}_{valid}$.
    \begin{equation}
    \hat{y} = \arg\max_{u} \sum_{y^{(i)} \in \mathcal{S}} \mathbb{I}\{\text{ans}_i = u\} \cdot C_{\text{path}}(y^{(i)}),
    \end{equation}
    where higher weighted value denotes higher quality. It ensures that the final vote is not swayed by the quantity of generated paths, but by their quality.
    
	\begin{wrapfigure}[12]{r}{0.4\textwidth}
        \centering
        \vspace{-5em}
       \includegraphics[width=1\linewidth]{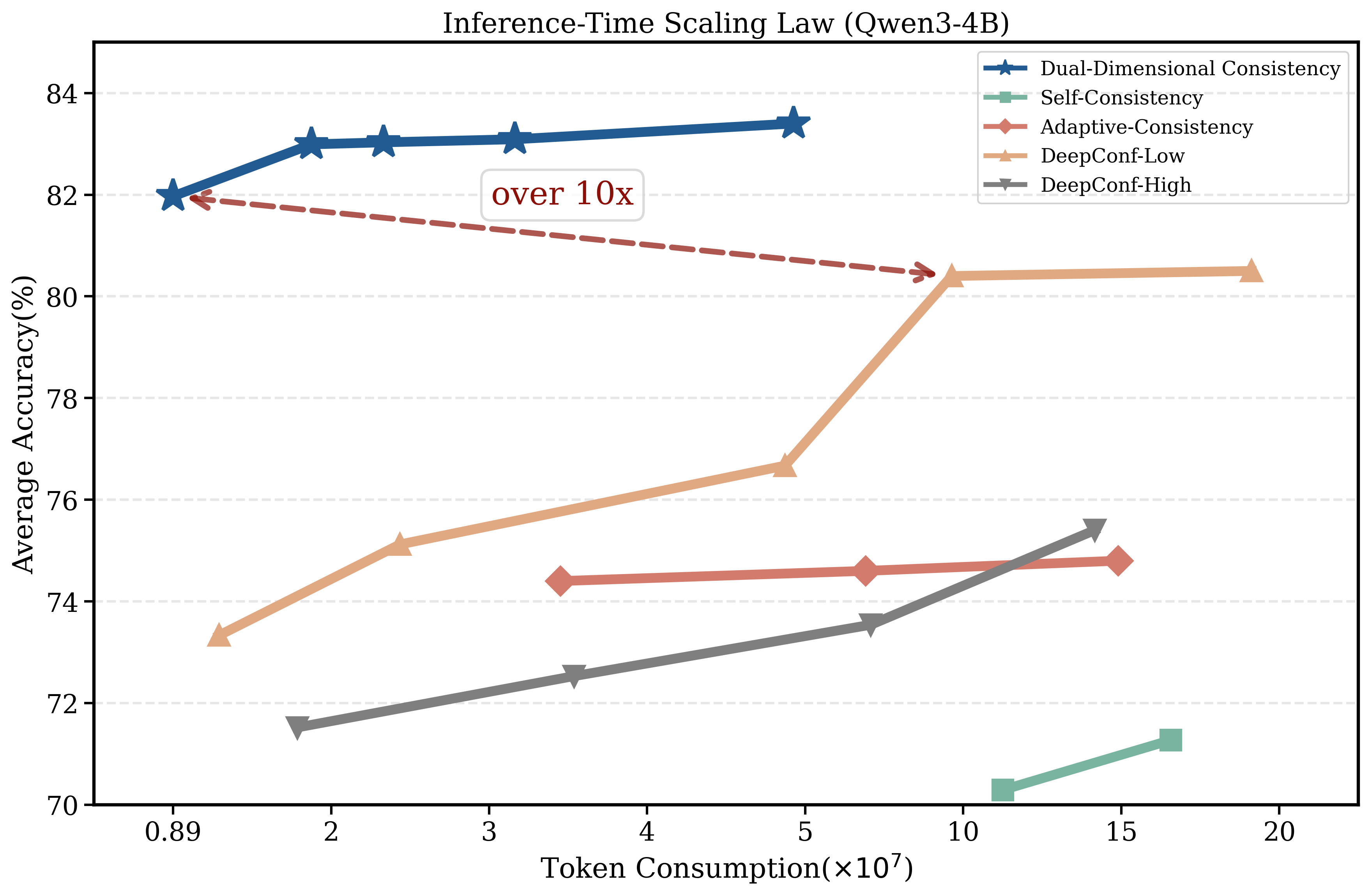}  
        \caption{Inference-time scaling law on Qwen3-4B-Instruct. The horizontal axis denotes the token consumption, while the vertical axis represents the average accuracy on 5 benchmarks.}
        \label{inference_time_scaling}
    \end{wrapfigure} 	
	
	\section{Experiments}
    \label{sec:experiments}
    
    \subsection{Experimental Setup}
    \label{sec:experiments-setup}
    
    \textbf{Datasets and Models.} We evaluate on five challenging reasoning benchmarks: MATH-500 \citep{hendrycks2021measuringmathematicalproblemsolving}, AMC23, AIME24, AIME25, and GPQA-diamond \citep{rein2023gpqagraduatelevelgoogleproofqa}. To make fair compraison, we evaluate on the same LLMs following previous works \citep{fu2025deepthinkconfidence,yan2025murmomentumuncertaintyguided}, specifically on Qwen 3 series and DeepSeek-R1-0528-Qwen3-8B. Detailed descriptions are provided in Appendix \ref{app:datasets-models}.

    \textbf{Baselines.} We compare our method with four established Self‑Consistency approaches: Self‑Consistency (SC) \citep{wang2023selfconsistencyimproveschainthought}, which samples multiple reasoning paths and selects the final answer via majority voting; Adaptive‑Consistency (AC) \citep{aggarwal2023letssamplestepstep}, which dynamically adjusts the sample count based on answer consensus; and confidence‑based strategies, DeepConf‑Low (D-L)and DeepConf‑High (D-H)  \citep{fu2025deepthinkconfidence}, which employ different confidence thresholds to balance exploration and exploitation.
    
    \textbf{Metrics.} 
    We report accuracy as Acc.@512, computed via confidence-weighted majority voting as described in \cref{Consensus}, and total token consumption in $10^7$ units.
    we repeat 5 times for each dataset and at least 16 times for each query and report the average accuracy.
    
    \textbf{Implementation Details.} We set $T=0.6$, top-$p=0.95$, and sampling budget $B=512$ unless otherwise noted. The init size is $B_{\text{init}}=16$ and $L(2048)$ denoting the sliding window size for group confidence is 2048 tokens. Further implementation details are in Appendix \ref{app:hyperparams}.


    \subsection{Main Results}
    \label{sec:main-results}
    
    \begin{table*}[t] 
    \centering
    \caption{Comprehensive evaluation of different methods across different reasoning benchmarks. We report accuracy (\%) as Acc. and token consumption ($\times 10^{7}$) as Tokens in voting size budget 512 for Majority Voting. Best results are in \textbf{bold}.}
    \label{tab:comprehensive-results}
    \small 
    \setlength{\tabcolsep}{1.8pt}  
    \begin{tabular}{@{}>{\raggedright\arraybackslash}p{1.0cm} *{12}{c} @{}}
        \toprule
        \multirow{2}{*}{\textbf{Method}} &
        \multicolumn{2}{c}{\textbf{MATH500}} &
        \multicolumn{2}{c}{\textbf{AMC23}} &
        \multicolumn{2}{c}{\textbf{AIME24}} &
        \multicolumn{2}{c}{\textbf{AIME25}} &
        \multicolumn{2}{c}{\textbf{GPQA-d}} &
        \multicolumn{2}{c}{\textbf{Avg.}} \\
        \cmidrule(lr){2-3} \cmidrule(lr){4-5} \cmidrule(lr){6-7} \cmidrule(lr){8-9} \cmidrule(lr){10-11} \cmidrule(lr){12-13}
        & Acc.$\uparrow$ & Tokens$\downarrow$ &
        Acc.$\uparrow$ & Tokens$\downarrow$ &
        Acc.$\uparrow$ & Tokens$\downarrow$ &
        Acc.$\uparrow$ & Tokens$\downarrow$ &
        Acc.$\uparrow$ & Tokens$\downarrow$ &
        Acc.$\uparrow$ & Tokens$\downarrow$ \\
        \midrule
        \multicolumn{13}{c}{\cellcolor{gray!15} Qwen3-1.7B} \\
        \midrule
        SC & 76.5 & 44.9 & 94.5 & 18.7 & 74.1 & 49.0 & 49.5 & 47.5 & 40.2 & 186.4 & 67.0 & 69.3 \\
        AC & 76.7 & 24.8 & 94.3 & 10.2 & 74.0 & 19.3 & 49.4 & 17.8 & 40.1 & 69.7 & 66.9 & 28.4 \\
        D-L & 84.0 & 26.2 & 92.5 & 9.7 & \textbf{76.7} & 9.6 & \textbf{53.3} & 9.3 & 40.4 & 126.6 & 69.4 & 36.3 \\
        D-H & 83.8 & 29.6 & 91.1 & 15.7 & \textbf{76.7} & 16.8 & 53.0 & 23.4 & 40.3 & 166.5 & 69.0 & 50.4 \\
        \textbf{DDC} & \textbf{89.4} & \textbf{10.2} & \textbf{95.0} & \textbf{1.5} & 73.3 & \textbf{3.8} & 50.0 & \textbf{3.7} & \textbf{42.9} & \textbf{20.6} & \textbf{70.1} & \textbf{8.0} \\
        \midrule
        \multicolumn{13}{c}{\cellcolor{gray!15} Qwen3-4B} \\
        \midrule
        SC & 83.0 & 48.6 & 96.0 & 16.6 & 80.0 & 38.4 & 66.5 & 41.0 & 48.5 & 80.4 & 74.8 & 45.0 \\
        AC & 83.2 & 18.1 & 96.0 & 4.5 & 79.9 & 12.8 & 66.4 & 15.6 & 48.5 & 23.7 & 74.8 & 14.9 \\
        D-L & 87.0 & 14.1 & \textbf{100.0} & 0.8 & 83.1 & 14.1 & 73.3 & 18.3 & 59.0 & 48.2 & 80.5 & 19.1 \\
        D-H & 87.3 & 30.6 & 96.2 & 10.4 & 80.2 & 24.2 & 66.7 & 25.8 & 48.7 & 50.6 & 75.8 & 28.3 \\
        \textbf{DDC} & \textbf{92.8} & \textbf{7.5} & \textbf{100.0} & \textbf{0.5} & \textbf{83.3} & \textbf{1.2} & \textbf{82.1} & \textbf{1.5} & \textbf{59.1} & \textbf{13.7} & \textbf{83.4} & \textbf{4.9} \\
        \midrule
        \multicolumn{13}{c}{\cellcolor{gray!15}Qwen3-8B} \\
        \midrule
        SC & 85.5 & 51.2 & 96.5 & 15.4 & 80.0 & 23.2 & 78.1 & 27.7 & 63.7 & 74.7 & 80.8 & 38.4 \\
        AC & 85.6 & 13.0 & 96.5 & 6.1 & 80.0 & 12.2 & 70.0 & 9.2 & 63.3 & 24.8 & 79.1 & 13.1 \\
        D-L & 87.9 & 11.4 & 96.7 & 12.5 & \textbf{80.4} & 9.0 & 76.7 & 13.1 & \textbf{65.2} & 33.1 & 81.3 & 15.8 \\
        D-H & 87.8 & 19.3 & 96.0 & 18.3 & 80.1 & 13.3 & \textbf{78.5} & 19.9 & 63.8 & 49.4 & 81.3 & 24.0 \\
        \textbf{DDC} & \textbf{92.8} & \textbf{7.3} & \textbf{97.5} & \textbf{1.0} & 80.0 & \textbf{1.3} & 73.3 & \textbf{2.2} & 63.6 & \textbf{13.7} & \textbf{81.4} & \textbf{5.1} \\
        \midrule
        \multicolumn{13}{c}{\cellcolor{gray!15}Qwen3-32B} \\
        \midrule
        SC & 90.0 & 76.8 & \textbf{97.5} & 10.2 & 84.8 & 20.0 & 80.1 & 24.3 & 71.1 & 74.4 & 84.7 & 41.1 \\
        AC & 90.1 & 21.5 & 97.0 & 3.4 & 84.9 & 10.6 & 80.0 & 9.1 & 70.1 & 23.7 & 84.4 & 13.7 \\
        D-L & 92.7 & 35.8 & 97.3 & 7.9 & 89.4 & 7.8 & 80.2 & 11.4 & 70.9 & 32.1 & 86.1 & 19.0 \\
        D-H & 91.8 & 40.0 & 96.8 & 9.3 & 86.4 & 8.8 & \textbf{80.2} & 16.1 & 70.0 & 41.6 & 85.0 & 23.2 \\
        \textbf{DDC} & \textbf{93.6} & \textbf{6.7} & \textbf{97.5} & \textbf{0.6} & \textbf{93.3} & \textbf{1.3} & 79.7 & \textbf{1.8} & \textbf{72.2} & \textbf{14.1} & \textbf{87.2} & \textbf{4.9} \\
        \midrule
        \multicolumn{13}{c}{\cellcolor{gray!15}DeepSeek-R1-0528-Qwen3-8B} \\
        \midrule
        SC & 91.1 & 53.8 & 93.5 & 14.1 & 86.7 & 35.5 & 82.3 & 40.1 & \textbf{72.5} & 99.2 & 85.2 & 48.5 \\
        AC & 91.0 & 17.9 & 93.5 & 4.7 & 86.6 & 11.8 & 82.3 & 13.3 & 72.4 & 38.0 & 85.2 & 17.1 \\
        D-L & 91.3 & 16.4 & 93.9 & 11.0 & \textbf{92.1} & 7.8 & \textbf{86.4} & 12.4 & 71.7 & 34.6 & \textbf{87.1} & 16.4 \\
        D-H & 91.5 & 25.6 & 94.0 & 9.7 & 86.7 & 14.5 & 81.4 & 23.7 & 72.4 & 69.1 & 85.2 & 28.5 \\
        \textbf{DDC} & \textbf{93.8} & \textbf{9.5} & \textbf{100.0} & \textbf{0.8} & 86.7 & \textbf{1.9} & 83.3 & \textbf{2.4} & 69.6 & \textbf{13.9} & 86.7 & \textbf{5.7} \\
        \bottomrule
    \end{tabular}
    \end{table*}
    
    Table~\ref{tab:comprehensive-results} presents the results on 5 reasoning benchmarks across 5 representative LLMs ranging from 1.7B to 32B.
    
    \textbf{DDC strikes a superior balance between effectiveness and efficiency over strong baselines.} 
    As shown in Table~\ref{tab:comprehensive-results}, DDC achieves superior performance over strong baselines across diverse benchmarks. This advantage is maintained across a wide spectrum of model scales and architectures, ranging from Qwen3-1.7B to 32B and the DeepSeek family. For instance, on the Qwen3-4B backbone, DDC improves the average accuracy by 8.6\% compared to Self-Consistency, and outperforms the strongest baseline, DeepConf-Low, by 2.9\%. Notably, on the challenging AIME25 benchmark, our method achieves a significant accuracy gain of 15.6\% over Self-Consistency. This performance benefits from the dynamic allocation of computational resources, allowing the model to secure higher accuracy on complex reasoning tasks without the exhaustive sampling required by fixed-budget methods.
    
    \textbf{DDC significantly minimizes computational overhead by reducing token consumption.} 
    Beyond accuracy gains, DDC exhibits superior token efficiency. In most scenarios, our method reduces token consumption by approximately $8\times$ compared to the standard Self-Consistency baseline. Crucially, when compared to the state-of-the-art efficiency-oriented method, Adaptive-Consistency, our method further reduces token usage by over $3\times$
     on average while maintaining higher accuracy. For example, on the AIME25 benchmark with Qwen3-4B, DDC consumes $27\times$ fewer tokens than the voting baseline. This indicates that existing adaptive methods may not optimally evaluate the halting criteria, often generating redundant tokens, whereas DDC effectively terminates generation once a confident answer is derived. 
     While DDC introduces additional analytical components like trend analysis, the associated computational overhead is negligible compared to the LLM inference process. Conversely, the significant reduction in token consumption translates directly into a massive reduction. DDC reduces total latency by up to $12.4\times$ relative to the fastest baseline. Detailed complexity analysis and detailed generation time is provided in Appendix~\ref{exp}.
    
    \subsection{On the Inference-Time scaling}
    \label{sec:Inference-Time scaling-analysis}
    Figure~\ref{inference_time_scaling} presents the inference-time scaling law on Qwen3-4B. From the scaling curves, DDC presents the consistency superiority on each token consumption, ranging from $0.89 \times 10^{7}$ to $20 \times 10^{7}$. Furthermore, DDC demonstrates over $10 \times$ efficiency than suboptimal methods while also delivering superior performance. Detailed results is in the Appendix~\ref{exp}.
    
    \subsection{Superiority over compute-intensive methods.}
    Table~\ref{tab:compute-intensive} demonstrates that DDC surpasses computationally exhaustive strategies like Predictive Decoding \citep{ma2024nonmyopicgenerationlanguagemodels} and $\phi$-Decoding \citep{xu2025phidecodingadaptiveforesightsampling}. On AIME25 with Qwen3-4B, our method exceeds $\phi$-Decoding by 15.0\% while reducing token consumption by $29\times$. While $\phi$-Decoding holds a marginal +3.0\% advantage on GPQA-diamond with DeepSeek-R1-0528-Qwen3-8B, it incurs a prohibitive $6.9\times$ computational overhead. This confirms that rigorous quality control superior reasoning results compared to unguided brute-force scaling.
    
        \begin{table*}[!h]
    	\centering
    	\begin{minipage}{0.48\textwidth}
    		\centering
    		\caption{Comparison with compute-intensive methods.}
    		\label{tab:compute-intensive}
    		\resizebox{0.91\linewidth}{!}{%
    			\begin{tabular}{lcccc}
    				\toprule
    				\multirow{2}{*}{Method} & \multicolumn{2}{c}{GPQA-diamond} & \multicolumn{2}{c}{AIME25} \\
    				\cmidrule(lr){2-3} \cmidrule(lr){4-5}
    				& Acc. & Tokens & Acc. & Tokens \\
    				\midrule
    				\multicolumn{5}{c}{\cellcolor{gray!15} Qwen3-4B} \\
    				\midrule
    				\textbf{DDC} & \textbf{59.1} & \textbf{13.7} & \textbf{82.1} & \textbf{1.5} \\
    				Predictive Decoding + SC & 54.8 & 94.3 & 66.8 & 48.9 \\
    				$\phi$-Decoding + SC & 56.6 & 86.9 & 67.1 & 43.3 \\
    				\midrule
    				\multicolumn{5}{c}{\cellcolor{gray!15} DeepSeek-R1-0528-Qwen3-8B} \\
    				\midrule
    				\textbf{DDC} & 69.6 & \textbf{13.9} & \textbf{83.3} & \textbf{2.4} \\
    				Predictive Decoding + SC & 69.5 & 98.6 & 78.0 & 57.1 \\
    				$\phi$-Decoding + SC & \textbf{72.6} & 95.8 & 79.2 & 52.6 \\
    				\bottomrule
    			\end{tabular}%
    		}
    	\end{minipage}%
    	\hfill
    	\begin{minipage}{0.48\textwidth}
    		\centering
    		\caption{Ablation Studies on Qwen3-4B and DeepSeek-R1-0528-Qwen3-8B.}
    		\label{tab:ablation}
    		\resizebox{0.92\linewidth}{!}{%
    			\begin{tabular}{lcccc}
    				\toprule
    				\multirow{2}{*}{Method} & \multicolumn{2}{c}{MATH500} & \multicolumn{2}{c}{AIME25} \\
    				\cmidrule(lr){2-3} \cmidrule(lr){4-5}
    				& Acc. & Tokens & Acc. & Tokens \\
    				\midrule
    				\multicolumn{5}{c}{\cellcolor{gray!15}Qwen3-4B} \\
    				\midrule
    				\textbf{DDC} & \textbf{92.4} & \textbf{10.2} & \textbf{82.1} & \textbf{1.5} \\
    				w/o Weighted Bayesian & 90.2 & 25.6 & 80.0 & 6.1 \\
    				w/o Inter-path Scaling & 88.1 & 29.7 & 77.6 & 21.6 \\
    				w/o Trend-Aware & 92.0 & 7.7 & 75.3 & 1.1 \\
    				w/o Intra-path Pruning & 85.5 & 18.4 & 70.9 & 10.3 \\
    				\midrule
    				\multicolumn{5}{c}{\cellcolor{gray!15}DeepSeek-R1-0528-Qwen3-8B} \\
    				\midrule
    				\textbf{DDC} & \textbf{93.8} & \textbf{9.5} & \textbf{83.3} & \textbf{2.4} \\
    				w/o Weighted Bayesian & 91.9 & 10.1 & 82.6 & 6.6 \\
    				w/o Inter-path Scaling & 92.3 & 20.9 & 81.6 & 19.3 \\
    				w/o Trend-Aware & 93.6 & 9.0 & 83.1 & 2.0 \\
    				w/o Intra-path Pruning & 91.2 & 18.4 & 82.5 & 14.9 \\
    				\bottomrule
    			\end{tabular}%
    		}
    	\end{minipage}
    \end{table*}    

    \begin{figure*}[ht]
      \centering
      \includegraphics[width=0.8\textwidth]{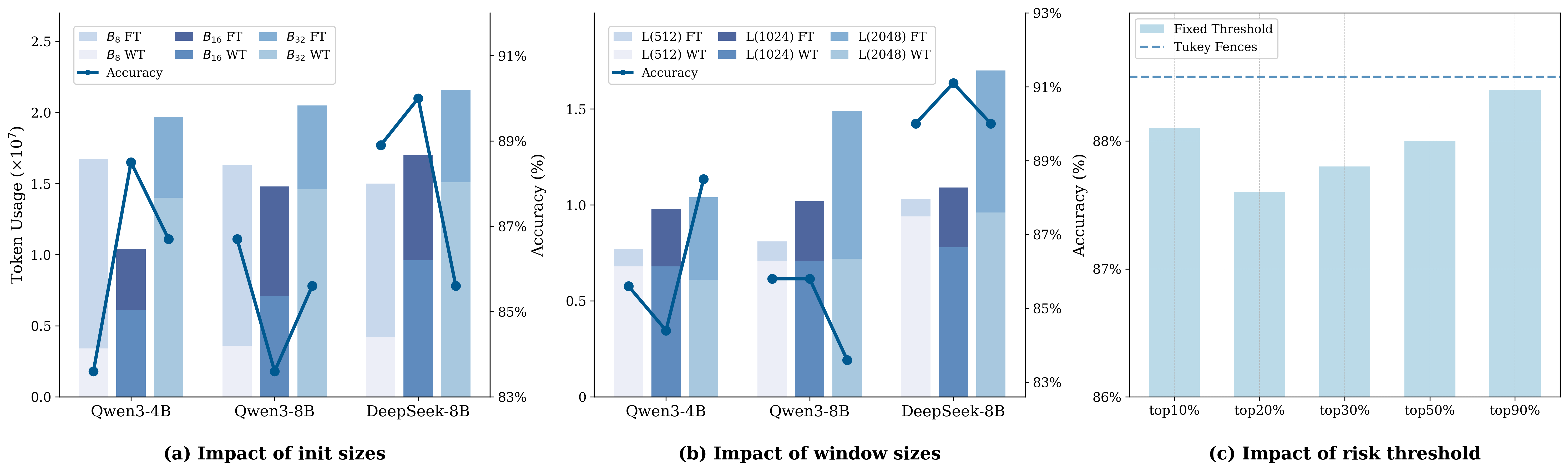}
      \caption{
        Hyperparameter sensitivity analysis.
        We report average accuracy and token consumption on competition benchmarks:
        (a) Impact of init sizes. 
        (b) Impact of window sizes.
         (c) Comparison between adaptive Tukey Fences and static percentage-based strategies.
      }
      \label{fig:warmup}
    \end{figure*}
    
    \subsection{Ablation Studies}

    Table~\ref{tab:ablation} reveals that DDC's performance stems from a functional interdependency between Inter-path and Intra-path.

    \textbf{Weighted Bayesian accelerates convergence.} 
    Reverting to standard frequency-based termination (\textit{w/o Weighted Bayesian}) increases token consumption by $\sim 2.5\times$ on MATH500 while dropping accuracy by over 2\%. This validates that weighting consensus by path quality prevents stops on low confidence answers and accelerates convergence to high quality answers. Inter-path module requires the Intra-path quality signal and the quality derived from Inter-path termination, serves as the critical coupling bridge that grounds the Intra-path trend analysis.
    
    \textbf{Inter-path Scaling optimizes resource allocation.} 
    Disabling adaptive budgeting (\textit{w/o Inter-path Scaling}) causes a prohibitive $>2.9\times$ token spike and an accuracy drop of up to 4.5\%. This indicates that our method safeguards accuracy by locking in the result immediately upon high quality convergence, avoiding diluting the correct consensus with incorrect ones. 
    
    \textbf{Trend-aware analysis mitigates hallucinations.} 
    Replacing dynamic signal analysis with static confidence pruning (\textit{w/o Trend-Aware}) leads to a 6.8\% accuracy decline on AIME25. This demonstrates that unlike static thresholds, our trend-aware mechanism successfully distinguishes the temporary confidence dips typical of deep reasoning from hallucinations.
    
    \textbf{Intra-path Pruning prevents error propagation.} 
    Removing depth-wise verification entirely (\textit{w/o Intra-path Pruning}) results in the largest performance degradation on MATH500 from 92.4\% to 85.5\%. This proves that aggressive early truncation is critical for filtering  incorrect paths before they pollute the final majority vote.
    
    \textbf{Hyperparameter Analysis.} 
    We analyze init sizes $B_{\text{init}}$ and window granularity $L$ across three competition datasets in Figure~\ref{fig:warmup}. Results indicate that moderate settings $B_{\text{init}}=16$, $L=2048$ achieve the best trade-off between accuracy and token efficiency. While models such as Qwen3‑8B and DeepSeek-R1-0528-Qwen3-8B underperform in \cref{tab:comprehensive-results}, adjusting $B_{\text{init}}$ and $L$ can improve accuracy.  Our adaptive $\tau_{\text{risk}}$ thresholding consistently outperforms static percentage-based strategies. Although sensitivity exists, DDC exhibits a broad, stable performance plateau for \( B_{\text{init}} \) and \( L \) across diverse models and datasets. Notably, all results in \cref{tab:comprehensive-results} were achieved using a unified hyperparameter set for all models from 1.7B to 32B, demonstrating strong generalizability. The calibration in DDC is an automatic, per-query statistical derivation via Tukey Fences rather than a manual tuning effort, requiring zero computational overhead for new datasets or models.
    Detailed experiments are provided in Appendix~\ref{exp}.
	
	\section{Conclusion}
    This work addresses the inefficiency of static and disjoint adaptive inference scaling by introducing DDC, a unified framework that balances budget and quality. By coupling a Confidence-Weighted Bayesian termination protocol with Trend-Aware Stratified Pruning, our method dynamically concentrates computational resources on high quality reasoning chains while identifying and discarding hallucinations. Extensive evaluations demonstrate that this approach achieves state-of-the-art performance, offering a scalable and highly efficient pathway for deploying reliable reasoning in resource-constrained environments.
	
	%
	%

	\nocite{langley00}



	\bibliography{ref}
	\bibliographystyle{plainnat}

	\newpage
	\appendix
	\onecolumn
    \section{Implementation Details}
    \label{app:implementation}
    For all experiments, we set the temperature to 0.6 and top-$p$ to 0.95 unless otherwise specified. 
    
    \subsection{Datasets Description}
    \label{app:datasets-models}
    
    We evaluate on five challenging reasoning benchmarks:
    
    \begin{itemize}
        \item \textbf{MATH-500}: A subset of 500 problems from the MATH dataset  \citep{hendrycks2021measuringmathematicalproblemsolving} covering algebra, geometry, calculus, and number theory. Problems require multi-step reasoning and symbolic manipulation.
        
        \item \textbf{AMC23}: The 2023 American Mathematics Competition problems, consisting of 30 challenging mathematical problems designed for high school students.
        
        \item \textbf{AIME24}: The 2024 American Invitational Mathematics Examination (AIME) problems, comprising 30 advanced problems that require deep mathematical insight and creative problem-solving.
        
        \item \textbf{AIME25}: The 2025 AIME problems, featuring 15 problems that test advanced mathematical reasoning and proof techniques.
        
        \item \textbf{GPQA-diamond}: A high-quality subset of the Graduate-Level Google-Proof Q\&A (GPQA) dataset  \citep{rein2023gpqagraduatelevelgoogleproofqa}, containing 198 questions across physics, chemistry, and biology that require expert-level knowledge and reasoning.
    \end{itemize}
    
    All datasets are in English. For each question, the model generates step-by-step reasoning before producing the final answer.
    
    \subsection{Metrics Calculation}
    We report two main metrics:
    
    \paragraph{Accuracy (Acc.@K)}: For each question, the max budget is $K=512$ reasoning paths. The final answer is selected using confidence-weighted majority voting as described in Section \ref{Consensus}. Accuracy is computed as the percentage of questions where the selected answer matches the ground truth.
    
    \paragraph{Token Consumption}: We measure the total number of tokens generated  across all paths for all questions in the dataset. This includes tokens from both the init phase and the main scaling phase. Token counts are reported in units of $10^7$ for readability.
    
    \paragraph{Additional Metrics}: For the inference-time scaling analysis in Figure \ref{inference_time_scaling}, we compute the average accuracy across all five benchmarks at various token budget levels, ranging from $0.89 \times 10^7$ to $20 \times 10^7$ tokens.
    
    \subsection{Hyperparameter Settings}
    \label{app:hyperparams}
    
    The key hyperparameters for Dual-Dimensional Consistency are:
    
    \begin{itemize}
        \item \textbf{init size ($B_{\text{init}}$)}: 16 samples for all experiments unless otherwise specified.
        \item \textbf{Sampling budget ($B$)}: 512 samples for all main experiments.
        \item \textbf{Sliding window size ($L$)}: 2048 tokens for group confidence calculation.
        \item \textbf{Majority threshold ($\gamma$)}: 0.5 denoting absolute majority.
        \item \textbf{Stopping confidence threshold ($\tau_{\text{stop}}$)}: 0.95.
        \item \textbf{Trend analysis window size}: 2048 tokens.
        \item \textbf{Penalty coefficient ($\eta$)}: 0.5 in the Structural Instability Score $\mathcal{R}$.
    \end{itemize}

    \section{Theoretical Analysis and Proofs}
    \label{sec:theory}
    
    In this section, we provide a formal analysis of the Dual-Dimensional Consistency (DDC) framework, grounding our approach in established statistical literature.
    
    \subsection{Inter-Path: Generalized Bayesian Termination}
    
    \subsubsection{Formulation as Generalized Posterior}
    To integrate continuous confidence weights $w_n = C_{\text{path}}(y^{(n)}) \in [0, 1]$ into the consensus decision, we adopt the Power Prior framework   \citep{2000power}. This framework provides a principled method for incorporating historical data or auxiliary information through power parameters. 
    
    Consider the Bernoulli likelihood for a single observation $u_n \in \{0,1\}$, where $u_n=1$ indicates a correct answer. The weighted log-likelihood is:
    \begin{equation}
        \log L_w(\mathcal{D}_N \mid \theta) = \sum_{n=1}^N w_n \left[ u_n \log \theta + (1-u_n) \log (1-\theta) \right].
    \end{equation}
    
    This formulation corresponds to a generalized Bayesian update where each observation is weighted by its confidence, analogous to importance sampling in Monte Carlo methods   \citep{Hesterberg2002Monte}. Combined with a conjugate Beta prior $\text{Beta}(\alpha_0, \beta_0)$, this yields a posterior:
    \begin{equation}
        \theta \mid \mathcal{D}_N \sim \text{Beta}\left(\alpha_0 + \sum_{n=1}^N w_n u_n, \beta_0 + \sum_{n=1}^N w_n (1-u_n)\right).
    \end{equation}
    
    Crucially, the posterior variance sales as $\text{Var}(\theta \mid \mathcal{D}_N) \propto (\sum w_n)^{-1}$, following standard Beta distribution properties   \citep{Gelman2014Bayesian}. Thus, low-confidence weights naturally inflate uncertainty. This mechanism acts as a statistical safety brake: if weights are low, the model requires a larger number of raw samples to reach the same posterior confidence threshold compared to the unweighted case.
    
    \subsubsection{Efficiency Analysis via Sample Complexity}
    
    We analyze the conditions under which Confidence-Weighted Bayesian Termination (CoW) achieves faster convergence than standard Frequency-Based Bayesian Termination (FrQ) using techniques from sequential analysis   \citep{1945Sequential}. Let $p = \Pr(u_n = 1)$ be the model's accuracy, and $q = 1-p$.
    
    \begin{figure*}[ht]
      \centering
      \includegraphics[width=\textwidth, height=\textheight, keepaspectratio]{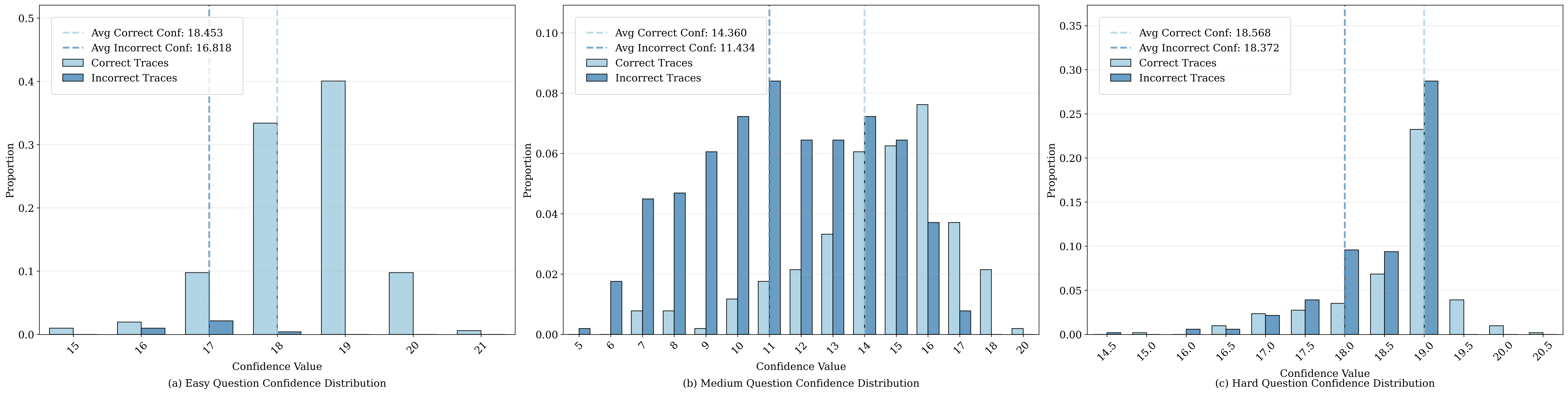}
      \caption{
        Analyzing the confidence distribution for problems of varying complexity.
      }
      \label{fig:confidence_distribution}
    \end{figure*}
    \begin{theorem}[Sufficient Condition for Acceleration]
    \label{thm:adaptive_efficiency}

    Let $N_{FrQ}$ and $N_{CoW}$ be the stopping times for Self-Consistency and CoW to reach a fixed confidence threshold $Z_\tau$. Assume:
    
    \textbf{Assumption 1 (Solvability):} $p > 0.5$.
    \textbf{Assumption 2 (Informative Weights):} Confidence weights are informative, meaning correct paths receive higher weights on average: $\mu_c > \mu_i$, where $\mu_c = \mathbb{E}[w \mid y=1]$ and $\mu_i = \mathbb{E}[w \mid y=0]$ as indicated in \citep{fu2025deepthinkconfidence}. Our experimental results of different complexity of MATH-500 \citep{hendrycks2021measuringmathematicalproblemsolving} in \cref{fig:confidence_distribution} demonstrates this, showing a higher average confidence for correct paths.
    
    Define the second moments $\mu_c^{(2)} = \mathbb{E}[w^2 \mid y=1]$ and $\mu_i^{(2)} = \mathbb{E}[w^2 \mid y=0]$.
    Then, CoW is strictly more efficient with $N_{CoW} < N_{FrQ}$ in the large sample limit if:
    \begin{equation}
        \label{eq:sufficient_condition}
        \frac{p\mu_c - q\mu_i}{\sqrt{p\mu_c^{(2)} + q\mu_i^{(2)}}} > \frac{p-q}{\sqrt{4pq}}.
    \end{equation}
    \end{theorem}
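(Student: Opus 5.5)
We recast both termination rules as first-passage problems for a random walk and compare their drift-to-noise ratios. In the large sample limit, the Beta posterior $\text{Beta}(\alpha_0+\sum w_n u_n,\ \beta_0+\sum w_n(1-u_n))$ is asymptotically normal. The statistic deciding whether $\mathbb{P}(\theta>1/2\mid\mathcal{D}_N)$ exceeds $\tau_{\text{stop}}$ is then the standardized margin of accumulated evidence for the leader over its competitors. We write this evidence as the signed sum
\[
S_N=\sum_{n=1}^N X_n,\qquad X_n = w_n(2u_n-1),
\]
with $w_n\equiv 1$ for FrQ. Reaching the fixed confidence level $Z_\tau$ then amounts to the sum crossing a boundary of order $Z_\tau$ times its standard deviation, i.e. the $z$-score $S_N/\sqrt{N\,\mathrm{Var}}$ (or, in the Wald-type normalization, $S_N/\sqrt{\sum X_n^2}$) exceeding $Z_\tau$.

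\textbf{Step 1: drift and scale of each increment.} By Assumption 1 and the i.i.d. sampling of paths, the increments are i.i.d. with the following moments.
\begin{itemize}
\item For CoW, conditioning on correctness gives $\mathbb{E}[X_n]=p\mu_c-q\mu_i$ and $\mathbb{E}[X_n^2]=p\mu_c^{(2)}+q\mu_i^{(2)}$.
\item For FrQ, $\mathbb{E}[X_n]=p-q$ and $\mathrm{Var}(X_n)=1-(p-q)^2=4pq$.
\end{itemize}

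\textbf{Step 2: stopping time.} By the law of large numbers, the $z$-statistic grows like $\sqrt{N}\,\rho$, where $\rho$ is the signal-to-noise ratio of a single increment:
\[
\rho_{FrQ}=\frac{p-q}{\sqrt{4pq}},\qquad \rho_{CoW}=\frac{p\mu_c-q\mu_i}{\sqrt{p\mu_c^{(2)}+q\mu_i^{(2)}}}.
\]
Setting $\sqrt{N}\rho=Z_\tau$ gives $N\approx Z_\tau^2/\rho^2$. Via Wald's identity $\mathbb{E}[S_N]=\mathbb{E}[N]\,\mathbb{E}[X]$ with overshoot neglected, this holds for expected stopping times, and almost surely as $Z_\tau\to\infty$. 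Since $p>1/2$, both drifts are positive, and Assumption 2 helps ensure $p\mu_c>q\mu_i$. Hence $N_{CoW}<N_{FrQ}$ if and only if $\rho_{CoW}>\rho_{FrQ}$, which is exactly \eqref{eq:sufficient_condition}.

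\textbf{Main obstacle: the normalization.} The CoW denominator in the theorem is the raw second moment, not the variance. We must therefore justify that the relevant noise scale is $\mathbb{E}[X^2]$, whereas the FrQ side uses the variance $4pq$. We will handle this carefully rather than hide it.
\begin{itemize}
\item First attempt: use the self-normalized statistic $S_N/\sqrt{\sum X_n^2}$, which is what the Beta posterior's evidence/variance ratio resembles. Its limit ratio is $\mathbb{E}X/\sqrt{\mathbb{E}X^2}$.
\item Then check that for FrQ the corresponding quantity still dominates $\rho_{FrQ}$. Note $\mathbb{E}X^2=1\ge 4pq$, so the condition with $4pq$ is stronger, which is fine for a sufficient condition.
\item Alternatively, bound $\mathrm{Var}\le\mathbb{E}X^2$ on the CoW side. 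This makes $\rho_{CoW}$ as stated a lower bound on the true ratio, so the condition remains sufficient.
\end{itemize}
Finally, we will note that the prior terms $\alpha_0,\beta_0$ and the boundary overshoot contribute $O(1)$ corrections, which vanish relative to $N$ in the large sample limit.
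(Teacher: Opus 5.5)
Your core comparison is the paper's proof, which runs as follows:
\begin{itemize}
\item it sets $X_n=w_n(2u_n-1)$ and computes the drift $p\mu_c-q\mu_i$;
\item it uses $\sigma^2_{CoW}\le\mathbb{E}[X_n^2]=p\mu_c^{(2)}+q\mu_i^{(2)}$ to get the conservative score $\tilde Z_{CoW}(N)=\sqrt{N}\,\rho_{CoW}$, which is your ``bound $\mathrm{Var}\le\mathbb{E}X^2$'' route;
\item it checks $\tilde Z_{CoW}(N_{FrQ})>Z_{FrQ}(N_{FrQ})\ge Z_\tau$.
\end{itemize}
Reading the large-sample limit as $Z_\tau\to\infty$ with $N\approx Z_\tau^2/\rho^2$, as you do, is an improvement. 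It yields the strict inequality the theorem asserts, whereas the paper's argument only ends with $N_{CoW}\le N_{FrQ}$. There are two slips. First, the ``if and only if'' in Step~2 is false once the CoW denominator is $\mathbb{E}[X^2]$ rather than $\mathrm{Var}(X)$. You concede this later, so drop it. Second, in the self-normalized route the inequality is reversed: the FrQ rate $p-q$ is \emph{dominated by} $\rho_{FrQ}=(p-q)/\sqrt{4pq}$. That is exactly why the stated condition is stronger than needed there.

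What does not hold is your opening claim that the Beta posterior reduces to this scale-invariant $z$-score. Ignore the $O(1)$ prior terms and write $A=\sum w_nu_n$, $B=\sum w_n(1-u_n)$, $W=A+B$, $S_N=A-B$, $\hat\theta=A/W$. The normal approximation to the posterior then gives
\[
z_N=\frac{\hat\theta-\tfrac{1}{2}}{\sqrt{\hat\theta(1-\hat\theta)/W}}=\frac{S_N}{\sqrt{W-S_N^2/W}}.
\]
For $w_n\equiv 1$ this is the variance-normalized FrQ statistic $\approx\sqrt{N}(p-q)/\sqrt{4pq}$, not the self-normalized one. For CoW, however, the denominator is linear in the weights. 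So $z_N$ scales like $\sqrt{c}$ under $w\mapsto cw$, while $\rho_{CoW}$ is invariant. Hence the stated condition cannot govern the literal Beta rule. Take $p=0.6$ with deterministic weights $1.5\epsilon$ on correct paths and $\epsilon$ on incorrect ones. The left side of the condition is $0.378$ against a right side of $0.204$. Yet the Beta-rule CoW score grows like $0.475\sqrt{\epsilon N}$, against $0.204\sqrt{N}$ for FrQ. So CoW is \emph{slower} whenever $\epsilon<0.18$, which is precisely the paper's own ``safety brake'' remark. The paper's proof has the same hole, since it simply posits ``the Z-score of evidence.'' Your argument therefore stands on the same footing as the paper's only if you state termination as an explicit modeling assumption. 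Namely, termination means a $z$-statistic of the signed weighted walk crossing $Z_\tau$, with noise measured by $\mathrm{Var}(X_n)$ or $\mathbb{E}[X_n^2]$. Do not claim this follows from the posterior.
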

    
    \begin{proof}
    Termination requires the Z-sore of evidence to exceed $Z_\tau$, following standard sequential testing theory. For FrQ, the Z-sore is derived from the binomial proportion test \citep{Wickens1998Categorical}:
    \[
    Z_{FrQ}(N) = \sqrt{N} \frac{p-q}{\sqrt{4pq}} + o(1).
    \]
    
    For CoW, we analyze the weighted process. Let $X_n = w_n(2u_n - 1)$ be the signed weight at step $n$. Then the expected weighted drift is $\mu_{CoW} = \mathbb{E}[X_n] = p\mu_c - q\mu_i$, and the variance is $\sigma^2_{CoW} = \mathbb{E}[X_n^2] - \mu_{CoW}^2$. Using the inequality $\sigma^2_{CoW} \le \mathbb{E}[X_n^2] = p\mu_c^{(2)} + q\mu_i^{(2)}$, we construct a conservative lower bound for the CoW Z-sore following the methodology of \citep{2014Sequential}:
    \[
    \tilde{Z}_{CoW}(N) = \sqrt{N} \frac{p\mu_c - q\mu_i}{\sqrt{p\mu_c^{(2)} + q\mu_i^{(2)}}}.
    \]
    
    Since $\sigma^2_{CoW} \le \mathbb{E}[X_n^2]$, the true Z-sore satisfies $Z_{CoW}(N) \ge \tilde{Z}_{CoW}(N)$ by application of Chebyshev's inequality \citep{1969Random}. Let $N_{FrQ}$ be the smallest $N$ such that $Z_{FrQ}(N) \ge Z_\tau$. Condition \eqref{eq:sufficient_condition} implies that for $N = N_{FrQ}$:
    \[
    \tilde{Z}_{CoW}(N_{FrQ}) > Z_{FrQ}(N_{FrQ}) \ge Z_\tau.
    \]
    
    Since $Z_{CoW}(N_{FrQ}) \ge \tilde{Z}_{CoW}(N_{FrQ})$, we have $Z_{CoW}(N_{FrQ}) > Z_\tau$. Therefore, CoW would have terminated by $N_{FrQ}$ or earlier, establishing $N_{CoW} \le N_{FrQ}$.
    \end{proof}
    
    \textbf{Safety Brake Mechanism:} When weights are uniformly small $w_n \approx \epsilon \ll 1$, both sides of inequality \eqref{eq:sufficient_condition} sale with $\epsilon$, preserving the condition. However, the effective sample size in the Bayesian posterior grows as $\epsilon N$, as shown in the effective sample size analysis of weighted importance sampling \citep{1964Monte}. Thus, even when the signal-to-noise ratio condition holds, CoW requires approximately $1/\epsilon$ times more raw samples than FrQ when confidence is low. This conservatism provides robustness against premature conclusions when the model is uncertain.

    \subsubsection{Empirical Verification}
    \begin{wrapfigure}[13]{r}{0.5\textwidth}
        \centering
        \vspace{-1em}
       \includegraphics[width=1\linewidth]{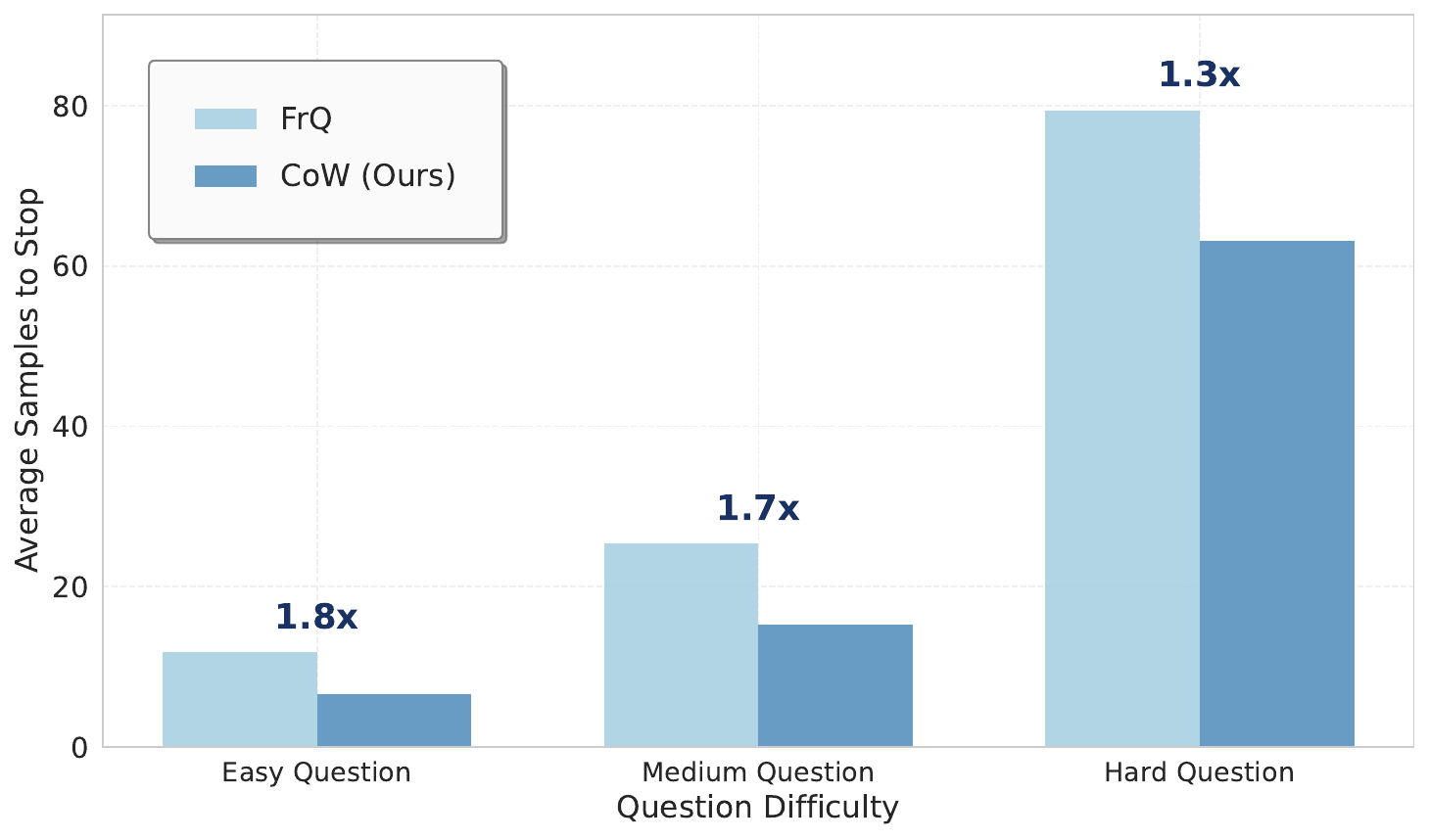}  
       \vspace{-1.5em}
        \caption{Empirical verification.  }
        \label{fig:efficiency_comparison}
    \end{wrapfigure} 
    To empirically validate Theorem \ref{thm:adaptive_efficiency}, we conducted Monte Carlo simulations across varying question difficulties, where Easy Questions denotes level 1 selected from MATH500 \citep{hendrycks2021measuringmathematicalproblemsolving}, Medium Questions denotes level 3 and Hard Questions denotes level 5. 
    
    As shown in Figure \ref{fig:efficiency_comparison}, CoW demonstrates significant efficiency gains over FrQ when confidence information is reliable. 
    These empirical results confirm the theoretical predictions of Theorem \ref{thm:adaptive_efficiency}. When confidence weights are informative, CoW achieves faster convergence by leveraging the additional signal contained in confidence estimates. Moreover, the simulations verify that our method naturally adapts to question difficulty providing efficiency gains when possible while maintaining reliability when confidence signals are ambiguous.
    
    
    \subsection{Intra-Path: Trend-Aware Pruning Analysis}
    
    \subsubsection{Statistical Foundation of Pruning Threshold}
    We employ Tukey Fences   \citep{2008Exploratory}: $\tau_{\text{risk}} = Q_3 + 1.5 \cdot \text{IQR}$, applied to the instability sore $\mathcal{R}$. This nonparametric threshold is robust to outliers and non-Gaussian distributions, addressing the skewed confidence distributions commonly observed in LLM reasoning. The interquartile range (IQR) provides a robust measure of dispersion, making the threshold adaptive to query difficulty without parametric assumptions.
    
    \subsubsection{Analysis of Failure Modes}
    Let $\mathbf{Z} \in \mathbb{R}^{k \times 2}$ be the phase-space trajectory with covariance eigenvalues $\lambda_1 \ge \lambda_2$. Spectral analysis of confidence trajectories follows established methods for time series analysis   \citep{2006Time}.
    
    While spectral analysis effectively filters stochastic noise, we must address hallucinations where the model is confident in a wrong answer. To guarantee correctness in this regime, we introduce the following assumption regarding the ensemble distribution:
    
    \textbf{Assumption 3 (Inter-Path Diversity):} We assume that error modes are not perfectly correlated across the sampling distribution. Formally, the expected weighted support for the correct answer $u^*$ exceeds that of any single incorrect answer $u \neq u^*$:
    \begin{equation}
        \mathbb{E}\left[\sum_{n} w_n \mathbb{I}(y_n = u^*)\right] > \max_{u \neq u^*} \mathbb{E}\left[\sum_{n} w_n \mathbb{I}(y_n = u)\right].
    \end{equation}
    
    This assumption is empirically supported by findings that LLM errors are often diverse and uncorrelated across reasoning paths   \citep{wang2023selfconsistencyimproveschainthought}. Under this assumption, the Confidence-Weighted Bayesian consensus successfully suppresses individual high-confidence errors by aggregating evidence from diverse correct paths, following the principle of the wisdom of crowds   \citep{2005The}.
    
    \begin{proposition}[Pruning Efficacy]
    \label{prop:pruning_efficacy}
    Under Assumption 3 and the Tukey Fences threshold $\tau_{\text{risk}}$:
    \begin{enumerate}
        \item \textbf{Stochastic Hallucinations} are pruned with high probability due to their maximal instability sores.
        \item \textbf{Decaying Hallucinations} are pruned via the velocity penalty term in $\mathcal{R}$.
        \item \textbf{Self-Reinforcing Hallucinations} pass the intra-path filter but are suppressed in the inter-path consensus under Assumption 3.
    \end{enumerate}
    \end{proposition}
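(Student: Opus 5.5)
The plan is to treat the three failure modes separately. For each, I first compute the behaviour of the Structural Instability Score $\mathcal{R}$ on an idealized trajectory model, and then compare it with the Tukey fence $\tau_{\text{risk}} = Q_3 + 1.5\cdot\text{IQR}$ computed on the init population. Throughout I will use two elementary facts about $\mathcal{R}$. The anisotropy term $1 - (\lambda_1-\lambda_2)/(\lambda_1+\lambda_2) = 2\lambda_2/(\lambda_1+\lambda_2)$ lies in $[0,1]$. It equals $1$ exactly when $\lambda_1=\lambda_2$, and it is near $0$ for a rank-one trend. The velocity penalty $\eta\,\mathbb{I}(\text{align}<0)\,\text{align}^2$ lies in $[0,\eta]$, since $|v_{1,x}|\le 1$. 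So $\mathcal{R}\in[0,1+\eta]$. The init population, which is dominated by paths that are trending or recovering, then concentrates at small $\mathcal{R}$. That makes $Q_3$ and the IQR small, and "pruned with high probability" reduces to showing that the relevant $\mathcal{R}$ exceeds $Q_3 + 1.5\cdot\text{IQR}$.

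\textbf{Stochastic hallucinations.} I model $C_t^g$ in the window as i.i.d.\ noise with variance $s^2$. After normalization the position coordinate has sample variance $1$. The velocity coordinate $(C_t^g - C_{t-1}^g)/\sigma_c$ has variance about $2$, and its covariance with position is about $\tfrac{1}{2}\cdot$(lag-0 minus lag-1 autocovariance)$/\sigma_c^2 \approx 1/2$.

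This is where the argument is delicate. Computed exactly, $\Sigma$ is not isotropic; its eigenvalue ratio gives an anisotropy term of roughly $2\lambda_2/(\lambda_1+\lambda_2)\approx 0.55$ rather than $1$. I would therefore phrase the claim comparatively: for a genuine trend (e.g.\ a linear or recovering drift) $\lambda_2/\lambda_1\to 0$ as $k$ grows. By the law of large numbers for the sample covariance with $k=2048$, the noise value concentrates at a constant strictly larger than the trend value, with fluctuations of order $k^{-1/2}$. Provided the init population is mostly trend-like, the noise score lies above the fence with high probability. Making this comparison against $Q_3+1.5\cdot\text{IQR}$ rigorous requires an assumption on the fraction of noise-like paths in the init set, below $25\%$ so that $Q_3$ is set by trend paths. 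I expect this to be the main obstacle, and I would state it explicitly as a hypothesis rather than derive it.

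\textbf{Decaying hallucinations.} I model $C_t^g = a - bt + \varepsilon_t$ with $b>0$. Then $\bar v<0$, the principal eigenvector has a nonzero position component, and $\text{align}=-|v_{1,x}|$, so the penalty $\eta v_{1,x}^2$ is activated. For a dominant linear drift the position variance dominates, so $|v_{1,x}|\to 1$ and the penalty approaches $\eta$. Recovering paths with $\bar v\ge 0$, which are the paths typical of the init quartiles, receive no penalty. Hence $\mathcal{R}_{\text{decay}}-\mathcal{R}_{\text{typical}}\gtrsim \eta$ minus an $O(k^{-1/2})$ term, and exceeding the fence again follows under the same population hypothesis.

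\textbf{Self-reinforcing hallucinations.} These are stable, high-confidence trajectories, so they have small $\mathcal{R}$ and survive pruning; the claim is only that they do not win the vote. I apply Assumption 3 directly to the weighted vote of Section~\ref{Consensus}. The score $S_u=\sum_n w_n\mathbb{I}(y_n=u)$ is a sum of bounded independent terms ($w_n\in[0,1]$), so Hoeffding's inequality gives $\Pr(S_u\ge S_{u^*})\le \exp(-2N\Delta_u^2/4)$, where $\Delta_u$ is the per-sample expected gap guaranteed by Assumption 3. A union bound over the finitely many observed competing answers then shows that $u^*$ wins with probability tending to one. Pruning other paths only helps here, since it removes only noisy or decaying supporters of wrong answers, provided pruning is independent of the answer label. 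I would state this as part of the hypothesis.
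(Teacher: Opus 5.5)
\textbf{How your plan relates to the paper's proof.} The paper's proof is only a two-sentence appeal. Claims 1--2 ``follow directly from the construction of $\mathcal{R}$ and the properties of Tukey Fences'', and claim 3 follows from Assumption~3 and ``consistency of weighted Bayesian aggregation''. Your plan uses the same three-way decomposition and tries to make it quantitative. You are right that it needs an explicit hypothesis on the init population, which the paper never states. You are also right that noise does not give $\lambda_1\approx\lambda_2$.

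\textbf{Stochastic and decaying cases.} Your covariance is off by a factor of two, and the corrected computation changes what your hypothesis must say. With $d_t=C^g_t-\mu_c$, summation by parts gives
\[
\sum_t d_t(d_t-d_{t-1})=\tfrac12\sum_t(d_t-d_{t-1})^2+\tfrac12\bigl(d_{\mathrm{end}}^2-d_{\mathrm{start}-1}^2\bigr).
\]
So for \emph{every} window, not just noise, $\Sigma_{12}=\tfrac12\Sigma_{22}$ up to an $O(1/k)$ endpoint term. Put $c=\Sigma_{22}/2$, which is half the von Neumann ratio of the window. Then
\[
\Sigma\approx\begin{pmatrix}1&c\\ c&2c\end{pmatrix},\qquad
\frac{2\lambda_2}{\lambda_1+\lambda_2}=1-\frac{\sqrt{(1-2c)^2+4c^2}}{1+2c}.
\]
White noise has $c\approx 1$. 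So the position--velocity covariance is $\approx 1$: it is the lag-$0$ minus lag-$1$ autocovariance itself, not half of it. The anisotropy term is then $1-\sqrt5/3\approx 0.25$, not $0.55$.

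This function never reaches $1$. It peaks at $c=1/3$ with value $1-1/\sqrt5\approx 0.55$, and it exceeds the white-noise value exactly when $c\in(1/13,1)$. For a smooth trend plus white noise, $c$ is roughly the noise share of the window variance. Hence any trend whose variance is less than $12$ times the noise variance scores \emph{higher} than pure noise, and the ``maximal instability'' premise of claim 1 is false.

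Your comparison therefore needs at least three quarters of the init windows to score below white noise. That means $c<1/13$ (or $c>1$) \emph{and} no active penalty, with spread small enough that $Q_3+1.5\,\mathrm{IQR}$ stays under the white-noise score. ``Fewer than $25\%$ noise-like paths'' is not that condition, because moderately noisy trending windows already lift $Q_3$ above white noise.

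The no-penalty part is itself restrictive. Since $\bar v=(C^g_{\mathrm{end}}-C^g_{\mathrm{start}-1})/(k\sigma_c)$ telescopes, any path with fluctuating confidence has $\bar v<0$ on roughly half its windows. The decaying case inherits the same dependence. With $\eta=0.5$, a clean decaying window has $\mathcal{R}\approx 0.5$, which is below the $\approx 0.55$ of an unpenalized window with $c\approx 1/3$. So $\mathcal{R}_{\text{decay}}-\mathcal{R}_{\text{typical}}\gtrsim\eta$ presupposes that typical windows have $c\approx 0$ and $\bar v\ge 0$.

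\textbf{Claim 3.} Your Hoeffding bound is for a fixed $N$. DDC instead takes its consensus at the data-dependent time when $1-I_{0.5}(\alpha_{u^*},\beta_{u^*})>0.95$. That can happen right after the $B_{\text{init}}=16$ init paths, and $N\le B$ always, so ``probability tending to one'' does not describe the procedure. You would need a bound uniform over $N\in[B_{\text{init}},B]$, for example a union bound over $N$ or a maximal inequality.

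For a small per-sample gap $\Delta_u$, the chance of stopping early on a wrong weighted leader is not small. This is worse because $C^g$, a negative mean log-probability, is not confined to $[0,1]$, and large weights make the Beta rule fire sooner; the range also enters your Hoeffding exponent. This is the same asymptotic ``consistency'' the paper appeals to, so you are not below its standard. But what you actually establish concerns a fixed-$N$ weighted vote, not DDC's early-stopped consensus.

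Finally, label-independent pruning merely thins every answer's support. It preserves Assumption~3 but does not ``help''; for that, pruning must be correlated with correctness.
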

    \begin{proof}
    The first two claims follow directly from the construction of $\mathcal{R}$ and the properties of Tukey Fences for outlier detection   \citep{Technometrics0Understanding}. The third claim follows from Assumption 3 and the consistency of weighted Bayesian aggregation   \citep{Jos1994Bayesian}.
    \end{proof}
    
    \subsection{Limitations and Broader Connections}
    \label{limitations}
    \begin{table}
      \caption{Computational cost comparison on MATH500 and AIME25. Width refers to the average number of generated paths per question; Token denotes the average token consumption for high-confidence but incorrect predictions. }
      \label{tab:compute-compare}
      \centering
      \begin{tabular}{lcccc}
        \toprule
        \multirow{2}{*}{Method} & \multicolumn{2}{c}{MATH500} & \multicolumn{2}{c}{AIME25} \\
        \cmidrule(lr){2-3} \cmidrule(lr){4-5}
        & Width & Token & Width & Token \\
        \midrule
        \multicolumn{5}{c}{\cellcolor{gray!15} Qwen3-4B} \\
        \midrule
        SC & 512 & 0.81 & 512 & 0.96 \\
        \textbf{DDC} & \textbf{62} & \textbf{0.17} & \textbf{94} & \textbf{0.19} \\
        \bottomrule
      \end{tabular}
    \end{table}
    \textbf{Limitations and Future Work:} DDC inherently relies on the strong correlation between confidence and correctness, offering no mechanism to circumvent systematic miscalibration \citep{2017On}, where in the model is confidently wrong across all reasoning paths.
    
    The Bayesian termination in DDC is fundamentally driven by the contrast between the evidence strength of the dominant candidate, 
    \begin{equation}
    \alpha_u = 1 + \sum w_i \cdot I\{\mathrm{ans}_i = u\},
    \end{equation}
    and the aggregate counter-evidence,
    \begin{equation}
    \beta_u = 1 + \sum w_j \cdot I\{\mathrm{ans}_j \neq u\}.
    \end{equation}
    In scenarios where a model exhibits systematic miscalibration, it generates reasoning paths that are both identically incorrect and highly confident. Consequently, $\alpha_{u'}$ grows with each new sample, reflecting an accumulation of high-quality evidence in the eyes of the base model, while $\beta_{u'}$ remains near its initial Laplace prior value, as no dissenting paths are generated to challenge the false consensus. This causes the posterior distribution $P(p_{u'} > 0.5 \mid D)$ to rapidly converge to the reliability threshold $\tau_{\mathrm{stop}} = 0.95$, typically triggering an early exit within just a few paths.
    
    Empirically, while the model predicts correctly with high confidence in the vast majority of scenarios \cref{fig:confidence_distribution}, we observe and analyze a minority of confidently incorrect cases in \cref{tab:compute-compare}. We acknowledge that such premature termination on an incorrect answer is an inherent boundary of any consensus-based strategy: if a model persistently adheres to a fallacy with high subjective certainty, no sampling-based method can rectify it via internal signals alone. However, by identifying that additional sampling would only reinforce the same incorrect consensus, DDC effectively reallocates the computational budget from unresolvable hallucinations to complex but solvable queries, acting as an efficient scaling mechanism that respects the model's current epistemic limits.
    
    To further directly address this boundary, a promising avenue for future work is to integrate advanced confidence calibration techniques into the DDC framework. By incorporating post-hoc calibration or intrinsic epistemic uncertainty estimation to dynamically penalty or re-weight the evidence scores $w_i$, we anticipate that the framework can better decouple true consensus from systematic hallucinations, thereby further expanding the safe operating boundaries of inference-time scaling.

    \textbf{Connections:} DDC unifies \textit{Importance Sampling}   \citep{Hesterberg2002Monte} for reweighting evidence with \textit{Sequential Probability Ratio Testing}   \citep{1945Sequential} for adaptive stopping, situated within the Power Prior Bayesian framework   \citep{Joseph2015The}.

    \subsection{Impact Statement}
    \label{impact}
    This paper advances efficient inference for large language models by unifying the optimization of sampling budget and reasoning path quality. The potential societal impact includes enabling more efficient and robust approaches for reliable reasoning. To the best of our knowledge, this work does not pose notable ethical or societal risks. Our goal is to support the responsible development of scalable and efficient reasoning methods.

    \section{More Experiment Results}
    \label{exp}
    
    \subsection{Theoretical Complexity Analysis of DDC Operations}
    To address concerns regarding the computational overhead introduced by Trend-Aware Stratified Pruning and Bayesian Termination, we provide a formal complexity analysis comparing DDC's auxiliary operations with the standard LLM inference process.

    \textbf{Dimensionality of Auxiliary Operations:}
        For each generation step $t$, DDC maintains a state vector $z_t \in \mathbb{R}^d$ where $d = 2$ representing the normalized position and velocity of global confidence. Over a sliding window of size $k$, we construct a trajectory matrix $Z \in \mathbb{R}^{k \times 2}$.
        
        \begin{itemize}
            \item \textit{Covariance Matrix Construction:} Computing $\Sigma = \frac{1}{k - 1} Z^{\top} Z$ involves a matrix multiplication of $2 \times k$ and $k \times 2$. The computational cost is $O(k \cdot d^2)$. Since $d = 2$, this is effectively $4k$ operations.
            
            \item \textit{Eigendecomposition:} Performing eigendecomposition on a $2 \times 2$ matrix is an analytically solvable problem with a constant complexity of $O(d^3) = O(8)$.
            
            \item \textit{Bayesian Update:} The calculation of the incomplete beta function $I_{\gamma}(\alpha, \beta)$ is a scalar operation performed once per path completion, with negligible cost.
        \end{itemize}
        
    \textbf{Quantifying the Overhead Ratio:}
        For a standard LLM with $N$ parameters like 7B or 32B, the computation required for generating a single token is approximately $2N$ FLOPs.
        
        For Qwen3-32B, one forward pass requires $\approx 64 \times 10^9$ FLOPs per token.
        
        The DDC analysis for $k = 2048$ requires $\approx 8 \times 10^3$ FLOPs per token.
        
        Overhead Ratio: 
        \[
        \frac{\text{FLOPs}_{\text{DDC}}}{\text{FLOPs}_{\text{LLM}}} \approx 1.25 \times 10^{-7}.
        \]
        
        This confirms that the analytical overhead of DDC is seven orders of magnitude smaller than the core inference cost. The real world bottleneck remains the LLM's forward pass, meaning any reduction in token consumption translates almost 1:1 into time savings.
    
    \begin{table*}[!t]
    \centering
    \caption{Latency Comparison between DeepConf-Low and DDC. We report the Accuracy as Acc., per-question warmup generation time as WG, final generation time as FG, and total generation time as TG in unit of seconds. All values are averaged over all questions. For accuracy, higher is better; for time, lower is better. The best results are in bold.}
    \footnotesize
    \label{tab:comparison-time-acc}
    \setlength{\tabcolsep}{1.5pt}  
    \begin{tabular}{@{}>{\raggedright\arraybackslash}p{1.2cm}*{12}{c}@{}}  
        \toprule
        \multirow{1}{*}{\textbf{Method}} & 
        \multicolumn{4}{c}{\textbf{AMC23}} & 
        \multicolumn{4}{c}{\textbf{AIME24}} & 
        \multicolumn{4}{c}{\textbf{AIME25}} \\
        \cmidrule(lr){2-5} \cmidrule(lr){6-9} \cmidrule(lr){10-13}
        & Acc. $\uparrow$ & WG $\downarrow$ & FG $\downarrow$ & TG $\downarrow$ & 
        Acc. $\uparrow$ & WG $\downarrow$ & FG $\downarrow$ & TG $\downarrow$ & 
        Acc. $\uparrow$ & WG $\downarrow$ & FG $\downarrow$ & TG $\downarrow$ \\
        
        \midrule
        \multicolumn{13}{c}{\cellcolor{gray!15}Qwen3-0.6B} \\
        \midrule
        D-L & 65.0 & \textbf{90.92} & 1098.35 & 1189.26 & 33.0 & \textbf{116.20} & 1571.14 & 1687.34 & 30.0 & \textbf{116.20} & 1571.14 & 1687.34 \\
        \textbf{DDC} & \textbf{70.0} & 212.03 & \textbf{328.43} & \textbf{540.46} & \textbf{36.7} & 386.31 & \textbf{618.89} & \textbf{1005.20} & \textbf{36.7} & 305.59 & \textbf{538.11} & \textbf{843.71} \\
        \midrule
        \multicolumn{13}{c}{\cellcolor{gray!15}Qwen3-1.7B} \\
        \midrule
        D-L & 92.5 & \textbf{97.05} & 1470.12 & 1567.17 & \textbf{76.7} & \textbf{131.13} & 2141.64 & 2272.76 & \textbf{53.3} & \textbf{127.83} & 2042.16 & 2169.99 \\
        \textbf{DDC} & \textbf{95.0} & 447.05 & \textbf{108.76} & \textbf{555.81} & 73.3 & 748.99 & \textbf{531.82} & \textbf{1280.81} & 50.0 & 808.66 & \textbf{487.71} & \textbf{1296.37} \\
        \midrule
        \multicolumn{13}{c}{\cellcolor{gray!15}Qwen3-4B} \\
        \midrule
        D-L & 100.0 & \textbf{494.25} & 26.65 & \textbf{520.89} & 83.1 & 880.10 & 7721.52 & 8601.62 & 73.3 & 1093.82 & 13483.31 & 14577.12 \\
        \textbf{DDC} & \textbf{100.0} & 506.28 & \textbf{17.62} & \textbf{523.90} & \textbf{83.3} & \textbf{811.43} & \textbf{71.00} & \textbf{882.43} & \textbf{82.1} & \textbf{857.79} & \textbf{149.24} & \textbf{1007.03} \\
        \midrule
        \multicolumn{13}{c}{\cellcolor{gray!15}Qwen3-8B} \\
        \midrule
        D-L & 96.7 & \textbf{469.15} & 5058.45 & 5527.59 & \textbf{80.4} & 1056.11 & 10610.01 & 11666.12 & \textbf{76.7} & 1374.06 & 15233.55 & 16607.62 \\
        \textbf{DDC} & \textbf{97.5} & 492.25 & \textbf{56.19} & \textbf{548.45} & 80.0 & \textbf{833.45} & \textbf{130.89} & \textbf{964.35} & 73.3 & \textbf{940.04} & \textbf{226.82} & \textbf{1166.86} \\
        \midrule
        \multicolumn{13}{c}{\cellcolor{gray!15}Qwen3-32B} \\
        \midrule
        D-L & 97.3 & 708.83 & 7610.27 & 8315.10 & 89.4 & 1584.69 & 15915.42 & 17500.11 & \textbf{80.2} & 2061.38 & 22850.73 & 24912.11 \\
        \textbf{DDC} & \textbf{97.5} & \textbf{706.61} & \textbf{36.89} & \textbf{743.51} & \textbf{93.3} & \textbf{1296.13} & \textbf{197.42} & \textbf{1493.55} & 79.7 & \textbf{1617.74} & \textbf{393.36} & \textbf{2011.10} \\
        
        \bottomrule
    \end{tabular}
    \end{table*}

    \subsection{Analysis of Inference-Time Efficiency and Latency}
    As presented in Table \ref{tab:comparison-time-acc}, we report the time consumption across different benchmarks. The time is decomposed into warmup generation in initial $B_{init}$ profiling and final generation in adaptive scaling phase, and total generation. Compared to the fastest baseline, DDC significantly reduces the total generation time across all model sizes.
    
    \textbf{Superior Efficiency on Large-Scale Models:}
        The advantage of DDC becomes more pronounced as the model size increases. For Qwen3-32B on the AIME25 benchmark: DeepConf-Low requires 24,912 seconds of total generation time, but DDC requires only 2,011 seconds, achieving a $12.4\times$ speedup.
        This dramatic reduction is primarily due to DDC's ability to trigger early termination much sooner than DeepConf-Low while maintaining comparable accuracy. For instance, for Qwen3-32B on the AIME25, DDC's FG time is 393s, whereas DeepConf-Low's is 22850s, a $58\times$ faster finalization.
    
    \textbf{The Warmup-to-Final Trade-off:}
        In some scenarios like Qwen3-1.7B, DDC incurs a higher WG time compared to DeepConf-Low. This is because DDC's initialization phase involves more thorough confidence profiling to establish a robust query-specific baseline. However, this initial investment yields massive returns in the FG phase. Across all models from 4B to 32B, the total generation time of DDC is consistently and significantly lower than the strongest baseline.
    
    \textbf{Robustness Across Task Difficulty:}
        On the most challenging AIME25 benchmark, DDC not only saves time but also significantly improves accuracy from 73.3\% to 82.1\% on Qwen3-4B. This demonstrates that DDC’s joint optimization approach, coupling inter-path width with intra-path depth, successfully filters out high-confidence hallucinations that often lead other adaptive methods to terminate prematurely on incorrect answers.
    
    The experimental evidence confirms that DDC's analytical complexity is practically zero relative to the LLM's compute. The substantial reduction in total time validates DDC as a highly efficient framework for deploying inference-time scaling in resource-constrained environments.

    \subsection{Sensitivity to Init Budget}
    We investigate the sensitivity of performance to the initialization budget $B_{\text{init}}$, which controls the number of reasoning paths generated during the initialization phase. As summarized in Table~\ref{tab:comprehensive-warmup}, the choice of $B_{\text{init}}$ exhibits a clear trade-off between statistical reliability and computational cost across different models. For Qwen3-4B, this setting yields 88.5\% average accuracy while consuming only 1.04$\times 10^7$ tokens. Smaller initial samples like $B_{\text{init}}=8$ provide insufficient statistical information, leading to premature or incorrect decisions. Larger samples like $B_{\text{init}}=32$ increase computational overhead without corresponding accuracy gains. This demonstrates that a calibrated initial exploration phase is helpful for establishing reliable baselines.
    \begin{table*}[t]
    \caption{Performance comparison across different initialization sizes and models. Acc. denoting accuracy rate and token consumption in units of $10^7$ are reported: WT denotes init tokens, FT denotes final tokens, TT denotes total tokens.}
      \label{tab:comprehensive-warmup}
      \begin{center}
        \begin{small}
        \setlength{\tabcolsep}{3pt}
            \begin{tabular}{@{} >{\raggedright}p{0.65cm} *{14}{c} @{}} 
                \toprule
                \multirow{2}{*}{\textbf{$B_\text{init}$}} & 
                \multicolumn{4}{c}{\textbf{AMC23}} & 
                \multicolumn{4}{c}{\textbf{AIME24}} & 
                \multicolumn{4}{c}{\textbf{AIME25}} &
                \multicolumn{2}{c}{\textbf{Avg}} \\
                \cmidrule(lr){2-5} \cmidrule(lr){6-9} \cmidrule(lr){10-13} \cmidrule(lr){14-15}
                & Acc. $\uparrow$ & WT $\downarrow$ & FT $\downarrow$ & TT $\downarrow$ & 
                Acc. $\uparrow$ & WT $\downarrow$ & FT $\downarrow$ & TT $\downarrow$ & 
                Acc. $\uparrow$ & WT $\downarrow$ & FT $\downarrow$ & TT $\downarrow$ &
                Acc. $\uparrow$ & TT $\downarrow$ \\
                
                \midrule
                \multicolumn{15}{c}{\cellcolor{gray!15}Qwen3-4B} \\
                \midrule
                8 & 97.5 & \textbf{0.26} & 0.54 & 0.80 & 80.0 & \textbf{0.34} & 1.12 & 1.46 & 73.3 & \textbf{0.42} & 2.34 & 2.77 & 83.6 & 1.68 \\
                16 & \textbf{100.0} & 0.39 & \textbf{0.12} & \textbf{0.51} & \textbf{83.3} & 0.69 & 0.47 & \textbf{1.16} & \textbf{82.1} & 0.75 & \textbf{0.71} & \textbf{1.46} & \textbf{88.5} & \textbf{1.04} \\
                32 & 100.0 & 1.04 & 0.00 & 1.04 & 80.0 & 1.40 & \textbf{0.35} & 1.75 & 80.0 & 1.77 & 1.37 & 3.14 & 86.7 & 1.98 \\
                
                \midrule
                \multicolumn{15}{c}{\cellcolor{gray!15}Qwen3-8B} \\
                \midrule
                8 & \textbf{100.0} & \textbf{0.29} & 0.38 & \textbf{0.67} & \textbf{83.3} & \textbf{0.36} & 1.15 & 1.51 & \textbf{76.7} & \textbf{0.43} & 2.27 & 2.69 & \textbf{86.7} & 1.62 \\
                16 & 97.5 & 0.57 & 0.44 & 1.01 & 80.0 & 0.71 & \textbf{0.57} & \textbf{1.29} & 73.3 & 0.86 & 1.31 & \textbf{2.17} & 83.6 & \textbf{1.49} \\
                32 & 100.0 & 1.14 & \textbf{0.18} & 1.32 & 80.0 & 1.44 & 0.71 & 2.15 & 76.7 & 1.79 & \textbf{0.89} & 2.68 & 85.6 & 2.05 \\
                
                \midrule
                \multicolumn{15}{c}{\cellcolor{gray!15}DeepSeek-R1-0528-Qwen3-8B} \\
                \midrule
                8 & 100.0 & 0.38 & \textbf{0.38} & \textbf{0.76} & 83.3 & \textbf{0.36} & 1.15 & \textbf{1.51} & 83.3 & \textbf{0.52} & 1.70 & \textbf{2.22} & 88.9 & \textbf{1.50} \\
                16 & \textbf{100.0} & 0.80 & \textbf{0.00} & 0.80 & \textbf{86.7} & 1.01 & 0.92 & 1.93 & \textbf{83.3} & 1.07 & 1.30 & 2.37 & \textbf{90.0} & 1.70 \\
                32 & 100.0 & 0.92 & 0.00 & 0.92 & 76.7 & 1.48 & \textbf{0.71} & 2.19 & 80.0 & 2.12 & \textbf{1.25} & 3.37 & 85.6 & 2.16 \\
                
                \bottomrule
            \end{tabular}
        \end{small}
      \end{center}
    \end{table*}

    \begin{table*}[t]
    \caption{L(x)=Sliding window with x tokens.}
      \label{tab:window-size}
      \begin{center}
        \begin{small}
        \setlength{\tabcolsep}{3.5pt}
            \begin{tabular}{@{} c *{12}{c} @{}}
                \toprule
                \multirow{2}{*}{\textbf{Window size}} &
                \multicolumn{3}{c}{\textbf{AMC23}} &
                \multicolumn{3}{c}{\textbf{AIME24}} &
                \multicolumn{3}{c}{\textbf{AIME25}} &
                \multicolumn{3}{c}{\textbf{Avg}} \\
                \cmidrule(lr){2-4} \cmidrule(lr){5-7} \cmidrule(lr){8-10} \cmidrule(lr){11-13}
                & Acc. $\uparrow$  & FT $\downarrow$ & TT $\downarrow$ &
                Acc. $\uparrow$  & FT $\downarrow$ & TT $\downarrow$ &
                Acc. $\uparrow$  & FT $\downarrow$ & TT $\downarrow$ &
                Acc. $\uparrow$  & FT $\downarrow$ & TT $\downarrow$ \\
    
                \midrule
                \multicolumn{13}{c}{\cellcolor{gray!15} Qwen3-4B} \\
                \midrule
                L(512) & 100.0 & \textbf{0.03} & 0.54 & 80.0 & \textbf{0.08} & \textbf{0.78} & 76.7 & \textbf{0.15} & \textbf{0.98} & 85.6 & 0.09 & \textbf{0.77} \\
                L(1024) & 100.0 & 0.12 & 0.63 & 80.0 & 0.30 & 1.00 & 73.3 & 0.48 & 1.32 & 84.4 & \textbf{0.30} & 0.98 \\
                L(2048) & \textbf{100.0} & 0.12 & \textbf{0.51} & \textbf{83.3} & 0.47 & 1.16 & \textbf{82.1} & 0.71 & 1.46 & \textbf{88.5} & 0.43 & 1.04 \\
    
                \midrule
                \multicolumn{13}{c}{\cellcolor{gray!15} Qwen3-8B} \\
                \midrule
                L(512) & 97.5 & \textbf{0.00} & \textbf{0.55} & 80.0 & \textbf{0.10} & \textbf{0.83} & \textbf{80.0} & \textbf{0.21} & \textbf{1.06} & \textbf{85.8} & \textbf{0.10} & \textbf{0.81} \\
                L(1024) & 97.5 & 0.15 & 0.71 & \textbf{83.3} & 0.23 & 0.94 & 76.7 & 0.54 & 1.40 & 85.8 & 0.31 & 1.02 \\
                L(2048) & \textbf{97.5} & 0.44 & 1.01 & 80.0 & 0.57 & 1.29 & 73.3 & 1.31 & 2.17 & 83.6 & 0.77 & 1.49 \\
    
                \midrule
                \multicolumn{13}{c}{\cellcolor{gray!15}DeepSeek-R1-0528-Qwen3-8B} \\
                \midrule
                L(512) & 100.0 & 0.00 & 0.76 & 86.7 & 0.10 & 1.09 & 83.3 & \textbf{0.18} & \textbf{1.23} & 90.0 & \textbf{0.09} & \textbf{1.03} \\
                L(1024) & 100.0 & 0.00 & 0.77 & 90.0 & 0.46 & 0.98 & 83.3 & 0.46 & 1.52 & \textbf{91.1} & 0.31 & 1.09  \\
                L(2048) & \textbf{100.0} & \textbf{0.00} & 0.80 & 86.7 & 0.92 & 1.93 & \textbf{83.3} & 1.30 & 2.37 & 90.0 & 0.74 & 1.70 \\
    
                \bottomrule
            \end{tabular}
        \end{small}
      \end{center}
    \end{table*}

    \subsection{Effect of Window Granularity}
    We analyze the window size $L$ used to compute group confidence. Results in Table~\ref{tab:window-size} reveal that larger windows $L=2048$ capture longer reasoning paths, leading to better performance on complex tasks like AIME25 82.1\% vs. 76.7\% with $L=512$ for Qwen3-4B. However, smaller windows $L=512$ can be more efficient for simpler problems. Our analysis confirms that longer context windows help distinguish genuine reasoning trends from fluctuations, improving the reliability of quality assessments.
    
    Together, these analyses validate our parameter choices $B_{\text{init}}=16$, $L=2048$ as providing a reliable default configuration that balances accuracy, efficiency, and robustness across diverse reasoning tasks and models.
    \subsection{Scaling behavior}
    We report the models’s accuracy and token consumption vs voting size on different models and datasets \cref{fig:scaling}.
    
    \begin{figure*}[ht]
      \vskip 0.1in
      \centering
      \includegraphics[width=\textwidth, height=\textheight, keepaspectratio]{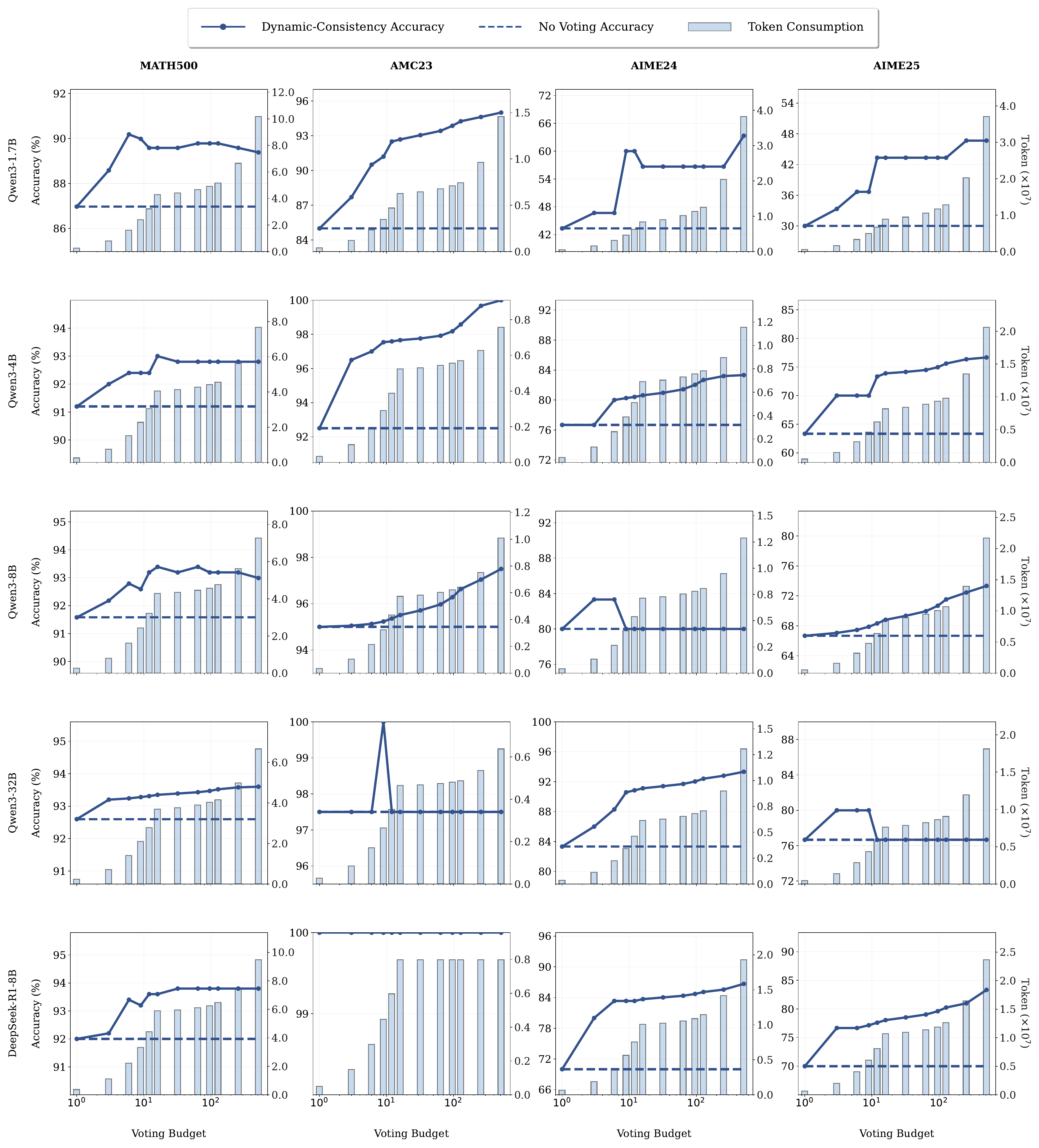}
      \caption{
        Scaling behavior of different models.
      }
      \label{fig:scaling}
    \end{figure*}

    \section{Algorithm}
    \label{algorithm}
    The complete workflow of Dual-Dimensional Consistency is presented in Algorithm~\ref{alg:init_phase} (Initialization Phase), Algorithm~\ref{alg:termination} (Bayesian Termination), Algorithm~\ref{alg:pruning_process} (Stratified Generation), and Algorithm~\ref{alg:main_framework} (Main Framework). The main framework orchestrates the overall process by first performing initialization, then iteratively generating new reasoning paths with termination condition checking, and finally aggregating results.
    
    \begin{algorithm}[tb]
    \caption{Initialization Phase}
    \label{alg:init_phase}
    \begin{algorithmic}[1]
    \REQUIRE Query $x$, Init Budget $B_{\text{init}}$, Window Size $L$
    \ENSURE $\mathcal{S}$, $\mathcal{C}_{\text{local}}$, $\mathcal{C}_{\text{global}}$, $\mathcal{R}_{\text{vals}}$, $\tau_{\text{pass}}$, $\tau_{\text{drop}}$, $\tau_{\text{risk}}$, $\alpha$, $\beta$
    
    \STATE $\mathcal{S} \gets \emptyset$, $\mathcal{C}_{\text{local}} \gets \emptyset$, $\mathcal{C}_{\text{global}} \gets \emptyset$, $\mathcal{R}_{\text{vals}} \gets \emptyset$
    \FOR{$i = 1$ to $B_{\text{init}}$}
        \STATE Generate path $y_i = (y_1, \dots, y_T) \sim P_\theta(\cdot \mid x)$
        
        \STATE Compute $C_t^l = P_t(1)$ and $C_t^g = -\frac{1}{k}\sum_{j=1}^k \log P_t(j)$ for $t \in [1, T]$
        
        \STATE $\forall G_t \in \mathcal{G}(y_i): C_{G_t}^l \gets \frac{1}{L}\sum_{j \in G_t} C_j^l, \quad C_{G_t}^g \gets \frac{1}{L}\sum_{j \in G_t} C_j^g$
        
        \STATE $C_{\text{path}}(y_i) \gets \min_{G_t} C_{G_t}^g$
        
        \STATE Construct matrices $\mathbf{Z}_t$ from window $\{C_j^g\}_{j \in G_t}$ 
        \STATE Compute $\mathcal{R}_t$ via Eigendecomposition of $\mathbf{Z}_t^\top \mathbf{Z}_t$ (Eq.~12)
        
        \STATE $\mathcal{C}_{\text{local}} \gets \mathcal{C}_{\text{local}} \cup \{C_{G_t}^l\}_{\forall t}$, $\mathcal{C}_{\text{global}} \gets \mathcal{C}_{\text{global}} \cup \{C_{G_t}^g\}_{\forall t}$
        \STATE $\mathcal{R}_{\text{vals}} \gets \mathcal{R}_{\text{vals}} \cup \{\mathcal{R}_t\}_{\forall t}$
        \STATE $\mathcal{S} \gets \mathcal{S} \cup \{y_i\}$
    \ENDFOR
    
    \STATE \textit{// Calibrate Query-Specific Thresholds}
    \STATE $\tau_{\text{pass}} \gets \text{Percentile}(\mathcal{C}_{\text{local}}, 90)$
    \STATE $\tau_{\text{drop}} \gets \text{Percentile}(\mathcal{C}_{\text{global}}, 20)$
    \STATE $\tau_{\text{risk}} \gets Q_3(\mathcal{R}_{\text{vals}}) + 1.5 \cdot \text{IQR}(\mathcal{R}_{\text{vals}})$
    
    \STATE \textbf{Initialize Bayesian Termination}
    \STATE $\alpha_u \gets 1, \beta_u \gets 1$ for all $u \in \mathcal{U}$
    \FOR{$y \in \mathcal{S}$}
        \STATE $u' \gets \text{ans}(y)$; $w \gets C_{\text{path}}(y)$
        \STATE $\alpha_{u'} \gets \alpha_{u'} + w$; \quad $\forall u \neq u', \beta_u \gets \beta_u + w$
    \ENDFOR
    
    \STATE \textbf{return} $\mathcal{S}$, $\mathcal{C}_{\text{local}}$, $\mathcal{C}_{\text{global}}$, $\mathcal{R}_{\text{vals}}$, $\tau_{\text{pass}}$, $\tau_{\text{drop}}$, $\tau_{\text{risk}}$, $\alpha$, $\beta$
    \end{algorithmic}
    \end{algorithm}
    
    \begin{algorithm}[tb]
    \caption{Confidence-Weighted Bayesian Termination}
    \label{alg:termination}
    \begin{algorithmic}[1]
    \REQUIRE Advantage counts $\alpha = \{\alpha_u\}$, disadvantage counts $\beta = \{\beta_u\}$
    \ENSURE Boolean indicating whether to terminate
    
    \STATE $u^* \gets \arg\max_{u} \alpha_u$
    \STATE \textit{// Compute the lower bound of the beta distribution's confidence interval}
    \STATE $p_{\text{lower}} \gets I_{0.5}^{-1}(\alpha_{u^*}, \beta_{u^*})$ \textit{// Inverse regularized incomplete beta function}
    \STATE \textit{// Check if the probability that $u^*$ is correct exceeds threshold}
    \IF{$1 - p_{\text{lower}} > 0.95$}
        \STATE \textbf{return} \text{true}
    \ELSE
        \STATE \textbf{return} \text{false}
    \ENDIF
    \end{algorithmic}
    \end{algorithm}
    
    \begin{algorithm}[tb]
    \caption{Trend-Aware Stratified Generation}
    \label{alg:pruning_process}
    \begin{algorithmic}[1]
    \REQUIRE Query $x$, Thresholds $\tau_{\text{pass}}, \tau_{\text{drop}}, \tau_{\text{risk}}$, Window $L$
    \ENSURE Completed path $y$ or NULL
    \STATE $y \gets \emptyset$
    \FOR{$t = 1$ to $T_{\max}$}
        \STATE $y_t \sim P_\theta(\cdot \mid x, y_{<t})$
        \STATE $y \gets y \cup \{y_t\}$
        \IF{$t < L$} \STATE \textbf{continue} \ENDIF
        
        \STATE Define window $G_t = \{y_{t-L+1}, \dots, y_t\}$
        
        \STATE \textit{// Tier 1: Local Confidence Check}
        \STATE $C_{G_t}^l \gets \frac{1}{L} \sum_{j \in G_t} P_\theta(y_j \mid y_{<j})$
        \IF{$C_{G_t}^l > \tau_{\text{pass}}$}
            \STATE \textbf{continue}
        \ENDIF
    
        \STATE \textit{// Tier 2: Global Confidence Check}
        \STATE $C_{G_t}^g \gets -\frac{1}{L} \sum_{j \in G_t} \frac{1}{k}\sum_{v=1}^k \log P_\theta(v \mid y_{<j})$
        \IF{$C_{G_t}^g < \tau_{\text{drop}}$}
            \STATE \textbf{return} \text{NULL}
        \ENDIF
    
        \STATE \textit{// Tier 3: Trend-Aware Analysis}
        \STATE Construct $\mathbf{z}_j = [\hat{C}_j^g, \Delta \hat{C}_j^g]^\top$ for $j \in G_t$
        \STATE $\mathbf{Z} \gets [\mathbf{z}_{t-L+1}, \dots, \mathbf{z}_t]^\top$
        \STATE $\{\lambda_1, \lambda_2\} \gets \text{Eigen}(\frac{1}{L-1}\mathbf{Z}^\top \mathbf{Z})$
        \STATE $\mathcal{R} \gets (1 - \frac{\lambda_1 - \lambda_2}{\lambda_1 + \lambda_2}) + \eta \cdot \mathbb{I}(\text{align} < 0) \cdot \text{align}^2$
        \IF{$\mathcal{R} > \tau_{\text{risk}}$}
            \STATE \textbf{return} \text{NULL}
        \ENDIF
    \ENDFOR
    \STATE \textbf{return} $y$
    \end{algorithmic}
    \end{algorithm}
    
    \begin{algorithm}[tb]
    \caption{Dual-Dimensional Consistency Framework}
    \label{alg:main_framework}
    \begin{algorithmic}[1]
    \REQUIRE Query $x$, Sampling Budget $B$, init Budget $B_{\text{init}}$, Window Size $L$
    \ENSURE Final Answer $\hat{y}$
    
    \STATE \textit{// Initialization Phase}
    \STATE Execute Algorithm~\ref{alg:init_phase} with inputs $x$, $B_{\text{init}}$, $L$ to obtain 
    \STATE $\quad \mathcal{S}$, $\mathcal{C}_{\text{local}}$, $\mathcal{C}_{\text{global}}$, $\mathcal{R}_{\text{vals}}$, $\tau_{\text{pass}}$, $\tau_{\text{drop}}$, $\tau_{\text{risk}}$, $\alpha$, $\beta$
    
    \WHILE{$|\mathcal{S}| < B$}
        \STATE $\text{terminate} \gets \text{ConfidenceWeightedBayesianTermination}(\alpha, \beta)$
        \IF{$\text{terminate}$}
            \STATE \textbf{break}
        \ENDIF
    
        \STATE $y_{\text{new}} \gets \text{TrendAwareStratifiedGeneration}(x, \tau_{\text{pass}}, \tau_{\text{drop}}, \tau_{\text{risk}}, L)$
        
        \IF{$y_{\text{new}} \neq \text{NULL}$}
            \STATE $\mathcal{S} \gets \mathcal{S} \cup \{y_{\text{new}}\}$
            \STATE $u' \gets \text{ans}(y_{\text{new}})$; $w \gets C_{\text{path}}(y_{\text{new}})$
            \STATE $\alpha_{u'} \gets \alpha_{u'} + w$
            \STATE $\forall u \neq u', \beta_u \gets \beta_u + w$
        \ENDIF
    \ENDWHILE
    
    \STATE \textbf{return} $\arg\max_{u} \sum_{y \in \mathcal{S}} \mathbb{I}\{\text{ans}(y) = u\} \cdot C_{\text{path}}(y)$
    \end{algorithmic}
    \end{algorithm}


\end{document}